\tikzset{
    myarrow2/.style={-stealth,shorten >=3pt,shorten <=3pt},
  myarrow/.style={stealth-,shorten >=3pt,shorten <=3pt},
  equiv/.style={
    black,
    very thin
  },
}
\pgfplotsset{
  lineplot/.style={
    black,
    dashed,
    thin,
    samples y=0
  },
  coordinate line/.style={
    black,
    samples y=0
  },
  fakeline/.style={
    black,
    opacity=0,
    samples y=0
  },
  point/.style={
    only marks,
    mark=*,
    black,
    mark size=0.5pt
  }
}
\pgfplotsset{compat=1.18}
\newtheorem{assumption}{Assumption}
\newtheorem{theorem}{Theorem}
\newtheorem*{remark}{Remark}
\newcommand{\angles}[1]{\left\langle#1\right\rangle}
\newcommand{\rta}{\rightarrow}
\newcommand{\norm}[1]{\left\lVert#1\right\rVert}
\newtheorem{defn}{Definition}
\DeclareMathOperator{\grad}{grad}
\newcommand\scalemath[2]{\scalebox{#1}{\mbox{\ensuremath{\displaystyle #2}}}}
\begin{document}

%

%

\twocolumn[

\aistatstitle{Aggregation on Learnable Manifolds for Asynchronous Federated Optimisation}
\runningauthor{Archie Licudi, Anshul Thakur, Soheila Molaei, Danielle Belgrave \& David Clifton}
\aistatsauthor{ Archie Licudi\textsuperscript{\hspace{1px}\normalfont 2,1} \And Anshul Thakur\textsuperscript{\hspace{1px}\normalfont1} \And Soheila Molaei\textsuperscript{\hspace{1px}\normalfont1} \And Danielle Belgrave\textsuperscript{\hspace{1px}\normalfont1,3} \And David Clifton\textsuperscript{\hspace{1px}\normalfont1} }
\vspace{0.5em}
\aistatsaddress{ \textsuperscript{1}Department of Engineering Science \\
  Oxford University\And\textsuperscript{2}Department of Computing \\
  Imperial College London \And \textsuperscript{3}GlaxoSmithKline }]

\begin{abstract}


    Asynchronous federated learning (FL) with heterogeneous clients faces two key issues: curvature-induced loss barriers encountered by standard linear parameter interpolation techniques (e.g. FedAvg) and interference from stale updates misaligned with the server’s current optimisation state. To alleviate these issues, we introduce a geometric framework that casts aggregation as curve learning in a Riemannian model space and decouples trajectory selection from update conflict resolution. Within this, we propose \textsc{AsyncBezier}, which replaces linear aggregation with low-degree polynomial (Bézier) trajectories to bypass loss barriers, and \textsc{OrthoDC}, which projects delayed updates via inner-product–based orthogonality to reduce interference. We establish framework-level convergence guarantees covering each variant given simple assumptions on their components. On three datasets spanning general-purpose and healthcare domains, including LEAF Shakespeare and FEMNIST, our approach consistently improves accuracy and client fairness over strong asynchronous baselines; finally, we show that these gains are preserved even when other methods are allocated a higher local compute budget.
\end{abstract}

\footnotetext{Preliminary work. Under review by AISTATS 2026.}

\section{INTRODUCTION}

In recent years, Federated Learning (FL) has seen a wave of research interest \citep{FLsurvey1,FLsurvey2} for its ability to keep data in private silos and achieve collaborative model training without the divulgence of centralised data. This has been particularly notable in the healthcare sector \citep{HealthFL,HealthFL1,HealthFL2}, where balancing evolving legislation around the privacy of sensitive data and the performance of models with high-stakes outcomes is a priority. In particular, FL studies optimisation problems of the form:
\begin{equation}
    \min_{\Theta \in \mathcal{M}^\Theta} \mathcal{L}(\Theta) := \frac{1}{M}\sum^M_{i = 1} w_i \mathbb{E}_{(X,y) \sim p_i}[\ell(y;X, \Theta)]
\end{equation}
For some vector of client weights $\textbf{w} \in \mathbb{R}^M$ and some set of client risk functions $\mathcal{L}_i$, corresponding to the expected value of loss $\ell$ over the client data distribution $p_i$. Each client has access only to $\mathcal{L}_i$ and must collaboratively find a minimum $\Theta$, accomplished in the early \textsc{FedAvg} algorithm by a simple arithmetic mean of client models trained by SGD \citep{fedavg}.

Where clients have differing dataset sizes or computational resources, it is often the case that some participants will consistently compute training steps faster than others \citep{compimbalance}, leading to long idle times in the synchronous \textsc{FedAvg} paradigm. This motivates consideration of \textit{asynchronous updates} \citep{fedasync}, where clients are able to submit their results and receive an updated global model to continue training immediately. In this setting, distributional heterogeneity between client datasets poses a more severe challenge as conflicting updates cannot be dealt with synchronously. Despite this, most FL systems in use today rest on the assumption that the linear interpolation of client models produces a strong multi-task model. In the irregular and non-convex loss landscapes of neural networks \citep{landscapeviz}, this assumption can fail as ``barriers'' of higher loss are encountered when averaging along straight lines.

\paragraph{Related Work} There have been many proposals since to mitigate the effects of client heterogeneity and asynchronous update staleness. \cite{fedprox} is a notable example, which adds a \textit{proximal} $L^2$ regularisation term to the client losses; this principle is used in the asynchronous setting by \cite{fedasync}. \cite{fedbuff} takes the simple step of buffering updates to increase training stability, where \cite{asyncfeded} aims to homogenise clients by scaling the number of local epochs each client performs according to the delay with which its updates are received, as well as down-weighting the contribution of updates according to this metric-based ``staleness'' value. Unlike the previous, \cite{dcasgd} directly modifies the update rule, using an approximation to the first-order Taylor expansion of the gradient at the up-to-date point, given the stale gradient. A number of literature proposals are based on adaptive optimisation at the server-side \citep{fadas,fedopt} and seek to delay-correct these momentum terms \citep{ormo, fadas}, but they maintain the same linear connectivity assumption as the aforementioned.

Where methods do make explicit consideration of mode connection geometry, it is usually either indirectly via flatness-aware minimisation \citep{fedsmoo} or whole manifold learning \citep{floco}, neither of which tackle loss barriers explicitly. A final approach which seeks to improve the linear connection quality is \cite{fedma}, performing neuron alignment \citep{mcneuronalignment} before aggregation to factor out permutation equivariance in layers; we find, however, that the number of epochs which each client trains for in the standard federated setting almost never leads to misaligned models, suggesting that this is only appropriate for the direct model fusion problem \citep{li2023deepmodelfusionsurvey}.
\paragraph{Our Contributions} We present a novel family of algorithms in full Riemannian generality \citep{riemanniansgd, Bonnabel_2013, rfedsvrg} that relaxes this linear assumption to the existence of an arbitrary low-loss geodesic; marking a departure from prior art, these ``aggregation manifolds'' are dynamically learned in a modified two-step training process, for which we provide a framework-level convergence result. From these foundations, we propose the \textsc{AsyncBezier} algorithm for asynchronous optimisation as a simple implementation where polynomial mode connections are directly learned as low-loss 1-manifolds and the novel \textsc{OrthoDC} staleness correction rule is deployed to factor out update directions which conflict with the global optimisation trajectory. Finally, we implement a comprehensive empirical testing suite using an asynchronous fork of the Flower FL library \citep{flower}, demonstrating that our proposal is able to consistently outperform existing literature baselines on the canonical benchmark datasets FEMNIST, LEAF Shakespeare, and CXR8.
\raggedbottom
\titlespacing*{\sect}{0pt}{*1}{*1}
\section{BACKGROUND}
\subsection{Mode Connectivity}
Different local minima (\textit{modes}) in parameter space are often connected by simple polynomial curves of low average loss, revealing a large, highly-connected subspace of good solutions \citep{garipovmode,lubana}. These polynomial mode connections often exist between heterogenous multi-task models even where the linear connection fails, and are consistently able to find paths of lower average loss, suggesting natural curvature to this solution subspace \citep{zhouetal}. 

\begin{figure}[t]
    \centering
    \includegraphics[width=0.8\linewidth]{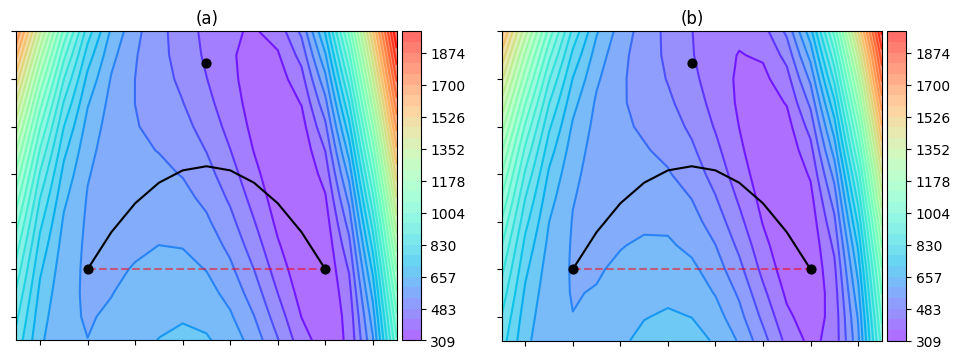}
    \caption{Quadratic Bezier mode connections learned during the federated training of LeNet-5, projected onto a 2-d loss landscape. Plot (a) shows cross-entropy loss w.r.t. a local training set and (b) w.r.t. the global test set.}
    \label{fig:simplexes1}
\end{figure}

Figure \ref{fig:simplexes1} shows the advantage of taking into account curvature and learning quadratic mode connections via a control point orthogonal to the linear connection. In both cases, we see a configuration reminiscent of figures in \cite{garipovmode}, where the longer local training time has allowed the optimisation trajectory to navigate around an ``obstruction'' in parameter space of higher loss that is encountered when moving along the linear connection, but is avoided by the quadratic curve. Work such as \citep{garipovswa, pswa} has examined the positive relationship between choosing models from the midpoint of mode connections and the flatness of minima, conjectured to be correlated with a model's generalisation ability \citep{generalflatness, samimprovinggeneralisation}. 

\subsection{Riemannian Optimisation Preliminaries}

We begin by briefly recalling the key mathematical components of Riemannian Gradient Descent \citep{Bonnabel_2013}:

\begin{defn}[Riemannian Gradient]
    Let $f : \mathcal{M} \rta \mathbb{R}$ be a real-valued $C^\infty$ function w.r.t. a Riemannian manifold $\mathcal{M}$. Then we write $\grad f(x) \in T_x\mathcal{M}$ to denote the unique tangent such that, for all $v \in T_x\mathcal{M}$
    \begin{equation}
        Df_x(v) = \angles{\grad f(x), v}
    \end{equation}
\end{defn}

\begin{defn}[Exponential Map]
    Letting $\gamma_v$ denote the unique geodesic from $x$ with initial tangent vector $v$, we define the Riemannian exponential map:
    \begin{equation}
        \exp_x(v) := \gamma_v(1)
    \end{equation}
\end{defn}
This generalises the idea in Eucldiean space of stepping along a straight line towards a point to R-manifolds. Since geodesics are constant speed, we have the desirable quality that $d(x, \exp_x(v)) \equiv \norm{v}$ where $d$ denotes the induced Riemannian metric on $\mathcal{M}$.

\begin{defn}[Metric-Preserving Transport]
    Letting $x,y \in \mathcal{M}$ we write $P_{x \to y} : T_x\mathcal{M} \to T_y\mathcal{M}$ to denote the \textbf{parallel transport} map with respect to the Levi-Civita connection. This map has the \textbf{(Riemannian) metric-preserving} property:
    \begin{equation}
        \forall v,w \in T_x\mathcal{M}, \;\;\; \angles{P_{x \to y}[v], P_{x \to y}[w]}_x = \angles{v,w}_y
    \end{equation}
\end{defn}
The technical definition of parallel transport in general terms is beyond the scope of this paper, as this property is the only one we actively use (along with the guaranteed existence of such a map for any $x,y \in \mathcal{M}$). It should be noted that $P_{x \to y}$ is not always the only function with this property - it is, however, the only one which also introduces no \textbf{torsion} to the underlying manifold \citep{lee2006riemannian}.

Riemannian GD then proceeds with a simple generalisation of the Euclidean GD update rule:
\begin{equation}
    \theta^{t + 1} \gets \exp_{\theta^t}(\eta \grad(\theta^t)) 
\end{equation}
For some learning rate $\eta \in (0,\infty)$. It is clear how this can be used to generalise Euclidean \textsc{FedAvg} to the Riemannian context, and we can similarly lift the two main paradigms of handling asynchronicity to manifolds. More precisely, the issue of $\grad$ being computed against $\theta^\tau$ for $\tau < t$ can be solved by trusting the learned \textit{position} or \textit{tangent}, exemplified by \textsc{FedAsync} \citep{fedasync} and ASGD \citep{asgd} respectively. We can express these in general Riemannian terms, letting $g^\tau$ denote the learned stochastic pseudo-gradient and $\hat{\theta}^\tau := \exp_{\theta^\tau}(g^\tau)$ the learned model:
\begin{align}
    \theta^{t + 1} &\longleftarrow \exp_{\theta^t}(\eta \exp^{-1}_{\theta^t}(\hat{\theta}^\tau)) \tag{\textsc{AsyncPos}} \\
    \theta^{t + 1} &\longleftarrow \exp_{\theta^t}(\eta P_{\theta^\tau \to \theta^t}[g^\tau]) \tag{\textsc{AsyncTan}}
\end{align}
Other ``delay correcting'' update rules may be lifted to the Riemannian case where there assumptions have non-Euclidean counterparts, such as DC-ASGD:
\begin{align}
    \theta^{t + 1} &\longleftarrow \exp_{\theta^t}\left(\eta P_{\theta^\tau \to \theta^t}[g^\tau + \text{Hess}f(x)[\exp^{-1}_{\theta^\tau}(\theta^t)] \right) \nonumber
\end{align}
The outer product of tangent vectors as an unbiased estimator for the Hessian trick used in the original Euclidean formulation can also be applied to our Riemannian version since the operation occurs in tangent space. In Euclidean space, this ``stepping vector'' can be expressed as a simple linear combination of the ones for \textsc{AsyncPos} and \textsc{AsyncTan}, but this necessitates flatness of the underlying manifold. Due to the variety of update rules proposed in the literature, in the next section we will black-box the function which takes $g^\tau$ as input and outputs a staleness corrected tangent direction for the general framework, before proposing a new geometric rule for \textsc{AsyncBezier}.

\section{THE ASYNCMANIFOLD FRAMEWORK}

We may define the ``aggregation problem'' of AsyncFL as finding the path in parameter space $\gamma : [0,1] \to \mathcal{M}_\Theta$ between the local and global models and the step size $\eta_g \in [0,1]
$ such that $\gamma(\eta_g)$ is in a low point of both the local and global loss landscapes. The most common paradigm for choosing $\gamma$ is the \textit{Linear Mode Connectivity} hypothesis: independent neural network minima are often connected by straight lines of low-loss, so $\gamma$ is simply the straight line $\Theta^\text{local} \leftrightarrow \Theta^\text{global}$. This assumption often fails to hold, however, although minima may still be connected by polynomial curves \citep{lubana}. Some authors consider a stronger hypothesis that extends to entire low-loss submanifolds connecting more than two minima \citep{simplicialcomplexes}, but these approaches based on flat simplicial complexes can encounter the same problem of loss barriers. Instead we make a more immediate generalisation of straight-line connectivity to the Riemannian context that both allows for dynamic adaptation to the solution space geometry and maintains the semantic richness of a manifold learning framework: that there exists a (low-loss) submanifold of $\mathcal{M}_\Theta$ on which the geodesic connection of minima is low-loss. In particular, this subsumes the \textit{Polynomial Mode Connectivity} hypothesis, as we notice that the graph of a Bezier curve is a 1-dimensional submanifold, on which the geodesics trivially follow the polynomial in $\mathbb{R}^\Theta$. An important class of manifolds where the geodesics coincide with a polynomial curve but maintain the dimensionality of $\mathcal{M}_\Theta$ are the \textbf{$\varepsilon$-tunnels} \citep{tunnels}: $\varepsilon$-balls extruded along a B\`ezier curve. This enables a variant of \textbf{Sharpness-Aware Minimisation (SAM)} \citep{samimprovinggeneralisation} for curve learning, which seeks to improve generalisation ability by increasing solution volume.

With the aggregation problem cast as curve learning, we may now present our proposed solution. We specify the \textsc{AsyncManifold} family of algorithms, where the learned aggregation manifold is arbitrary, and provide a particular implementation in \textsc{AsyncBezier}, where we directly learn geodesics as
(quadratic) Bezier curves; finally, we provide a convergence result for the framework, agnostic to the choice of manifold.

\paragraph{\textsc{Training Step (Client)}} Given a particular global model $\Theta^t$, the goal of the client is to learn a submanifold (with boundary) $\mathcal{M}_\phi$ of parameter space $\mathcal{M}_\Theta$ around $\Theta^t$. Our framework is based on the key observation that we can learn a wide class of submanifolds with usual gradient-based methods by choosing a smooth manifold (with boundary) $\mathcal{M}$ and learning a smooth map $\iota_\phi : \mathcal{M} \rightarrow \mathcal {M}_\Theta$, inducing a Riemannian structure on $\mathcal{M}$ by pulling back the metric along the embedding. We call $\mathcal{M}$ equipped with a metric depending smoothly on $\phi$ the Riemannian manifold $\mathcal{M}_\phi := (\mathcal{M},g_\phi)$.

\begin{figure}[h]
    \centering
    \vspace{-1em}
    \includegraphics[width=1.1\linewidth]{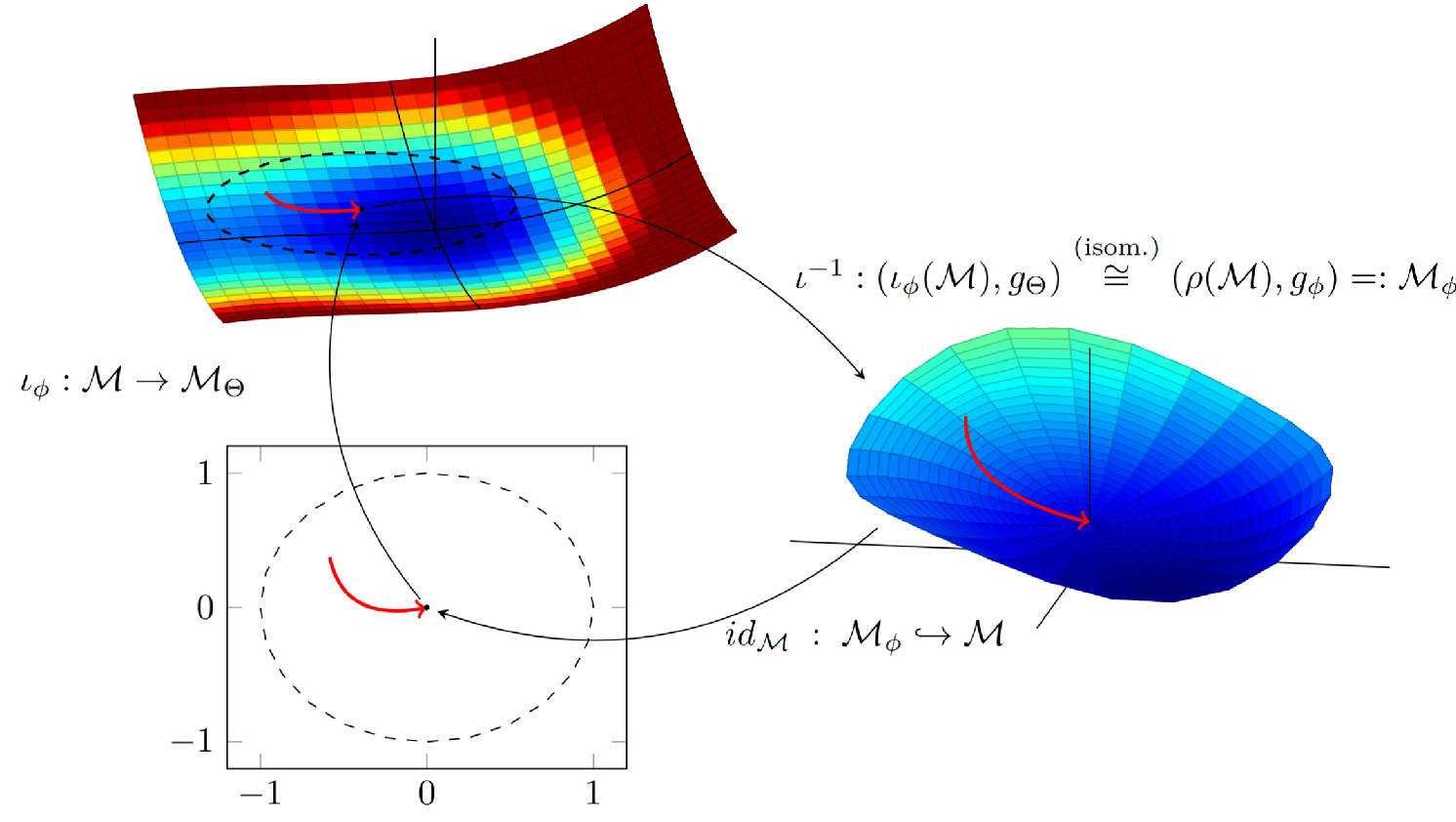}
    \caption[]{Illustration of our approach to manifold learning. $\mathcal{M} = D_1(\mathbb{R}^2)$ maps into parameter space $\mathcal{M}_\Theta = \mathbb{R}^3$ by the learned embedding. $\iota_\phi(\mathcal{M})$ inherits a Riemannian structure from $\mathcal{M}_\Theta$ via the subspace metric, distorted by the to the loss-minimising nature of $\iota_\phi$, which is in turn isometric to a retraction of $\mathcal{M}$ equipped with the pullback metric (in this illustration, the retraction $\rho = id$). The curvature of this $\mathcal{M}_\phi$ space thus induces a lower-loss curved path in $\mathcal{M}$, and hence $\mathcal{M}_\Theta$ under the embedding. \footnotemark}
    \label{fig:enter-label}
\end{figure}

We learn parametrised realisations of $\mathcal{M}$ in $\mathcal{M}_\Theta$ by choosing a smooth map $\iota : \mathcal{M}_\Phi \times \mathcal{M} \rta \mathcal{M}_\Theta$, for some R-manifold $\mathcal{M}_\Phi$. This $\iota$ has two important features: first, for every $\Theta \in \mathcal{M}_\Theta$, there exists a unique $\phi_\Theta \in \mathcal{M}_\Phi$ such that $\iota_{\phi_{\Theta}}(\mathcal{M}) = \{\Theta\}$ - inducing a subspace $\mathcal{M}_\Phi^0$ homoemorphic to $\mathcal{M}_\Theta$. This ``compression'' property is necessitated by the pointwise FL optimisation state being members of $\mathcal{M}_\Theta$ - in order to learn a full a low-loss manifold, we need simply to choose $\mathcal{M}_\Phi$ as the parameter space. Second, $\iota_\phi$ should be an immersion wherever $\phi \not\in \mathcal{M}_\Phi^0$ - this ensures that the pullback metric from $\mathcal{M}_\Theta$ will always induce a Riemannian structure on $\mathcal{M}_\phi$ as soon as the local and global models diverge. Where $\iota_\phi$ is not injective, we will abuse notation and write $\iota_\phi^{-1}(\Theta)$ to mean any member of the $\Theta$ preimage. 

We may now optimise this embedding using standard Riemannian SGD on $\mathcal{M}_\Phi$. For this, we must choose a sampling distribution $\textbf{P}$ over $\mathcal{M}$ which approximates the uniform distribution on the geodesic connecting $\iota_\phi^{-1}(\Theta^t)$ to the distinguished \textit{local model} $\omega \in \mathcal{M}$. Starting from $\Phi_{\Theta^t}$ for the received global model $\Theta^t$, $\phi$ is then trained against the objective:
\begin{align}
    \min_\phi \mathbb{E}&_{S \sim \textbf{P}}\left[F_i(X; \iota_\phi(S), \Theta^t)\right] \;\; := \\ & \;\; \min_\phi \mathbb{E}_{S \sim \textbf{P}}\left[\ell_i(X; \iota_\phi(S)) + \frac{\mu}{2}\norm{\iota_\phi(S) - \Theta^t}^2\right] \nonumber
\end{align}
Optimisation proceeds by general Riemannian gradient descent on $\mathcal{M}_\Phi$, sampling $S_k \sim \textbf{P}_k$ at local batch $k$ - this is possible by the smoothness of the cost function on $\mathcal{M}_\Theta
$ and the smooth immersivity of $\iota$. After $K$ total rounds of optimisation, the reparametrisation vector
\begin{equation}
    v^t_i \in T\mathcal{M}_\Phi := \left(\exp_{\phi_{\Theta^t}}\right)^{-1}(\phi^K)
\end{equation}
is transmitted back to the server.

\begin{remark}
    To perform stochastic analysis we must, separately to any differentiable structure, endow $\mathcal{M}$ and $\mathcal{M}_\Theta$ with probability measures. $\iota_\phi$ must be measurable with respect to them, but the pushforward and latent measures on $\iota_\phi(\mathcal{M})$ need not coincide. In particular, sampling from the uniform distribution on $\iota_\phi(\mathcal{M})$ with respect to the $\mathcal{M}_\Theta$ measure may be possible only by computing a corrected non-uniform distribution on $\mathcal{M}$.
\end{remark} 

\textsc{AsyncBezier} uses the simplest choice of $\iota$ under this framework, learning the aggregation path directly. We choose $\mathcal{M} := [0,1]$ and $\mathcal{M}_\Phi = (\mathbb{R}^\Theta)^{n+1}$ to be the space of control points for degree-$n$ B\`ezier curves in the Euclidean model space $\mathbb{R}^\Theta$. $\iota$ is then defined by de Casteljau's formula, which for the quadratic case is:
\begin{align}
    \iota : (\mathbb{R}^\Theta)^3 \times [0,1] &\longrightarrow \mathbb{R}^\Theta \\
    A,B,C,t &\longmapsto (1-t)^2A + 2t(1-t)B + t^2C \nonumber
\end{align}
Notice that $\iota_\phi$ is thus almost everywhere an embedding. We then fix the parametrisation such that $\iota_\phi(0) = \Theta^t$ and $\omega := 1$. \textbf{P} is set to the Dirac delta at 1 for the first $K_1$ rounds, forcing movement away from the global mode, followed by $\mathcal{U}[0,1]$ for the subsequent $K - K_1$.

\footnotetext{In this figure, we have shown $\mathcal{M}_\Theta$ with Riemannian structure corresponding to the loss landscape for illustration purposes - this will not be the case in general and usually the Riemannian structure of $\mathcal{M}_\Theta$ is defined without $\ell$. Since evaluating the loss function is costly, we induce a new geometry of $\mathcal{M}_\phi$ via distortions in $\iota_\phi$}


\paragraph{Correction Step (Server)} At time step $\tau$, the server receives $v^t_i$ from client $i$. Since $\Theta^\tau$ is out of synchronisation with $\Theta^t$, we need a framework for correcting this staleness. To achieve this,  we fix a function $\pi :  \mathcal{M}_\Theta^2  \times T\mathcal{M}_\Phi \rta T\mathcal{M}_\Phi$, mapping learned gradient and a $(\Theta^t,\Theta^\tau)$ pair to the delay-corrected gradient, ensuring that the $\iota_\phi(\mathcal{M})$ this induces always contains $\Theta^\tau$. We can view this $\pi$ as inducing a weak form of smooth fibre bundle from the total space:
\begin{equation*}
    S_{\phi, \Theta^t} := \left\{ \left(\Theta^\tau, \iota_{(\pi_\phi(\Theta^\tau | \Theta^t)}(x)\right) \mid \Theta^\tau \in \mathcal{M}_\Theta, x \in \mathcal{M}\right\}
\end{equation*}
In particular, for \textsc{AsyncBezier}, $\iota_\phi(\mathcal{M}) \cong \mathcal{M}$ for all $\phi \not\in \mathcal{M}_\Phi^0$, which is true almost everywhere. The ``optimal'' bundle would be one where each $\Theta \in \mathcal{M}_\Theta$ is associated with $\mathcal{M}_\phi$ for the optimal $\phi$, but this would define an intractable $\pi$. Instead, the \textsc{AsyncManifold} framework black-boxes the optimisation (given $\Theta^t$) of the initial client $\phi$ from the perspective of the server and ensure that the transformation to delay-corrected parameters is simple to reason about. This $\pi$ can thus be seen as approximating the true gradient at $\Theta^t$ via the learned geodesic, with convergence guaranteed as long as its error is at most a constant degree worse than the parallel transport of the curve tangent at $\Theta^t$ to $\Theta^s$.

For \textsc{AsyncBezier}, we propose a $\pi$ incorporating a novel delay-correction procedure that directly leverages a general principle of Riemannian geometry: \textit{orthogonality}. One method deployed successfully in multi-task learning is the (sequential) \textbf{Gradient Surgery} approach of \cite{gradientsurgery}. This algorithm considers client update tangents $\Delta_1,\Delta_2$ to ``conflict'' if they have an obtuse angle between them (i.e. $\angles{\Delta_1, \Delta_2}_{\mathcal{M}_\Theta} < 0$). Where updates conflict, $\Delta_1$ will be projected into the orthogonal complement subspace of $\Delta_2$, hence any action of $\Delta_1$ in direct opposition to $\Delta_2$ will be cancelled, whilst preserving orthogonal movement. Inspired by this work, we propose the \textsc{OrthoDC} formula, for tunable hyperparameter $\vartheta \in [-1,1]$ and global drift vector $\Delta^g := \exp^{-1}_{\phi_{\Theta^t}}(\phi_{\Theta^\tau})$:
    \begin{align*}
        \pi(\Theta^t,\Theta^\tau,\Delta) := \begin{cases}\Delta - \text{proj}_{\Delta^g}\left(\Delta\right) & \frac{\angles{\Delta,\Delta^g}}{\norm{\Delta} \cdot \norm{\Delta^g}} \leq \vartheta \\ \Delta & \text{otherwise}\end{cases}
    \end{align*}
Where $\text{proj}_\textbf{b}(\textbf{a}) := \frac{\angles{\textbf{a},\textbf{b}}}{\angles{\textbf{b},\textbf{b}}}\textbf{b}$. Traditional gradient surgery is recovered by setting $\vartheta = 0$, and where $\vartheta = 1$ we only ever consider the orthogonal component of movement. Using $\vartheta = 1$ thus conceptually ``factors out'' the difference between the \textsc{Pos} and \textsc{Tan} approaches on $T_\Theta\mathcal{M}_\Theta$; factoring out the difference in tangent space leads to the approaches coinciding exactly on flat (Euclidean) manifolds, but only up to the first order otherwise. Finally, the server computes
\begin{equation}
    \psi^\tau \longleftarrow \exp_{\phi_{\Theta^\tau}}(\pi(\Theta^t,\Theta^\tau,v^t_i))
\end{equation}

\paragraph{Aggregation Step (Server)} With a final manifold $\mathcal{M}_{\psi^\tau}$ chosen, we find the tangent vector $v^\tau := \exp_{\iota_{\psi^\tau}^{-1}(\Theta^\tau)}(\omega)$ and transition to the next global model by moving part-way along the exponential map. We first define $S^{t,\tau} := 1 + \alpha\left(\norm{\Theta^\tau - \hat{\Theta}^\tau}/\norm{\Theta^t - \Theta^\tau} - 1\right)$ (where $\hat{\Theta}^\tau := \exp_{\Theta^\tau}^{\psi^\tau}\left(v^\tau\right)$) for some decay strength hyperparameter $\alpha \in [0,1]$, and finally define the new global model:
\begin{align}
    \Theta^{\tau + 1} &\longleftarrow \exp_{\Theta^\tau}^{\psi^\tau}\left(S^{t,\tau} \cdot w_{i_\tau}\eta_g^\tau v^\tau\right)
\end{align}
for some global learning rate $\eta^\tau_g \in (0,1]$. This integrates a \textit{staleness penalty}, inspired by \cite{asyncfeded}, to down-weight desynchronised updates. Clients which are perfectly sequential should have an approximately constant $S^{t,\tau}$ (decaying as the gradient magnitude decreases over time), with faster clients being up-weighted and slower ones down-weighted. 

We recall that geodesics are arc-length parametrised and step size in this exponential map is measured according to the $\mathcal{M}_\Theta$ metric pulled back to $\mathcal{M}$. For \textsc{AsyncBezier}, we  achieve this by reparametrisation by simply scaling $\eta^\tau_g$ to ensure that: 
\begin{align}
    \norm{\exp^{-1}_\Theta\left(\iota_\phi\left(\gamma\left(S^{t,\tau}\cdot w_{i_\tau}\tilde{\eta}_g^\tau\right)\right)\right)}_{\mathcal{M}_\Theta} = \\
    \norm{S^{t,\tau}\cdot  w_{i_\tau}\tilde{\eta}_g^\tau\exp_\Theta^{-1}(\iota_\phi(\gamma(1)))}_{\mathcal{M}_\Theta} \nonumber
\end{align}

\paragraph{Meta-Aggregation Step (Server)} Finally, the server may choose to perform \textbf{Stochastic Weight Averaging (SWA)} \citep{garipovswa}, where learning rate schedules are fixed or cyclic and the final returned model is an average of models from throughout the latter stages of the learning process. This is done by Karcher mean on $\mathcal{S}$, the server-side manifold. This can, much like $\mathcal{M}$, be embedded into $\mathcal{M}_\Theta$ \textit{a priori} or by learning a parametric $\iota_{\xi^*}$ such that:
\begin{equation}
    \xi^* = \underset{\xi \in \Xi}{\arg\min} \left[\sum_{t \in A} \min_{x \in \iota_\xi(\mathcal{S})} d_\Theta(\Theta^t, x) \right]
\end{equation}
For some subset of model indices $A \subset [T]$. $d_\Theta$ here denotes any metric on $\mathcal{M}_\Theta$, which may or may not coincide with the induced Riemannian one.

\subsection{Convergence Analysis}
\label{sec:conv}

We may now present our main result on convergence of the framework in general terms; see Appendix \ref{appendix:proof} for precise details of the assumptions made on choice of components.

\begin{theorem}[Convergence of \textsc{AsyncManifold}]
    The \textsc{AsyncManifold} algorithm, with no SWA, assumptions as above, and the local learning rate $\eta_l = \mathcal{O}(1/\max\{2C_1,\sqrt{T}\})$, converges with:
    \begin{align}
        \scalemath{0.93}{\frac{1}{T}\sum^T_{t = 0} \mathbb{E}\norm{\grad \mathcal{L}(\Theta^t)}^2} &\leq \scalemath{0.93}{\mathcal{O}\left(\frac{\lambda_\text{min}}{Q\eta_g\sqrt{T}}\left[\mathcal{L}(\Theta^0) - \mathbb{E}\mathcal{L}(\Theta^T)\right]\right)} \nonumber\\ & \;\;\;\scalemath{0.93}{+ \mathcal{O}\left(\frac{\lambda_\text{min}}{\sqrt{T}}(C_2 + 2C_3)\right)} 
    \end{align}
    Where $C_1,C_2,C_3$ are constants as defined in the proof.
\end{theorem}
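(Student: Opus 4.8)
The plan is to prove this by a Riemannian descent-lemma argument adapted to asynchronous updates, treating the curve-learning client step and the correction map $\pi$ as black boxes obeying the component assumptions recorded in Appendix~\ref{appendix:proof}. The backbone is geodesic $L$-smoothness of $\mathcal{L}$ on $\mathcal{M}_\Theta$: for the aggregation update $\Theta^{\tau+1} = \exp^{\psi^\tau}_{\Theta^\tau}(\xi^\tau)$ with $\xi^\tau := S^{t,\tau} w_{i_\tau}\eta_g^\tau v^\tau$ this gives
\[
  \mathcal{L}(\Theta^{\tau+1}) \le \mathcal{L}(\Theta^\tau) + \angles{\grad\mathcal{L}(\Theta^\tau),\, \xi^\tau} + \tfrac{L}{2}\norm{\xi^\tau}^2 .
\]
Everything then reduces to showing that, conditioned on $\Theta^\tau$, the expected step $\mathbb{E}[\xi^\tau \mid \Theta^\tau]$ is a negative multiple of $\grad\mathcal{L}(\Theta^\tau)$ up to a controllable bias, and that $\mathbb{E}\norm{\xi^\tau}^2$ is bounded.

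First I would decompose the bias into four sources and bound each. (i) The \emph{staleness gap} $\grad\mathcal{L}(\Theta^t) \ne \grad\mathcal{L}(\Theta^\tau)$: controlled by $L$-smoothness together with the bounded-delay assumption $\tau - t \le \tau_{\max}$, which bounds $d_\Theta(\Theta^t,\Theta^\tau)$ by a sum of at most $\tau_{\max}$ recent step norms. (ii) The \emph{correction error} of $\pi$ relative to the parallel transport of the learned curve tangent from $\Theta^t$ to $\Theta^\tau$: bounded by hypothesis up to a constant factor; for \textsc{OrthoDC} the orthogonal projection is a $1$-Lipschitz, norm-non-increasing map, so it only improves this constant. (iii) The \emph{proximal bias}, of order $\mu\, d_\Theta(\iota_\phi(S),\Theta^t)$, uniformly bounded because the learned curve has bounded length. (iv) The \emph{curve-approximation error}: here the bounded metric distortion of the immersion $\iota_\phi$ — eigenvalues of the pullback metric lying in $[\lambda_{\min},\lambda_{\max}]$ — together with bounded sectional curvature lets me compare the geodesic on $\mathcal{M}_\phi$ against the negative-gradient ray with error $\mathcal{O}(\lambda_{\min}^{-1})$, which is where the prefactor $\lambda_{\min}$ enters the bound. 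In parallel I would bound $\mathbb{E}\norm{\xi^\tau - \mathbb{E}\xi^\tau}^2$ by the local-SGD variance accumulated over the $K$ batches (the two-phase sampling schedule $\textbf{P}_k$ only contributes a bounded-support reweighting) and bound $\mathbb{E}\norm{\xi^\tau}^2$ using bounded stochastic gradients and the fact that $S^{t,\tau}$ stays in a fixed interval bounded away from $0$ and $\infty$ (by $\alpha \in [0,1]$ and the gradient bound). These feed the constants $C_2$ (variance-type) and $C_3$ (bias-type).

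Next I would substitute into the descent lemma, take expectations, and apply Young's inequality to the cross term to extract a negative multiple $-c\,Q\eta_g\,\mathbb{E}\norm{\grad\mathcal{L}(\Theta^\tau)}^2$ (with $c$ collecting the distortion and curve-tangent constants and $Q$ the effective client-weight/sampling factor) plus residuals of order $\eta_g\eta_l\,(\mathrm{bias})^2 + \eta_g^2\eta_l^2\,(\mathrm{second\ moment})$. Taking $\eta_l = \mathcal{O}(1/\max\{2C_1,\sqrt{T}\})$ ensures the $\tfrac{L}{2}\norm{\xi^\tau}^2$ term is dominated by the negative term — the $2C_1$ branch covers the small-$T$ regime — leaving a net coefficient $\gtrsim \tfrac12 c\,Q\eta_g$ on $\mathbb{E}\norm{\grad\mathcal{L}(\Theta^\tau)}^2$. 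Telescoping over $\tau = 0,\dots,T-1$ collapses the loss terms to $\mathcal{L}(\Theta^0) - \mathbb{E}\mathcal{L}(\Theta^T)$; dividing by $T$ and by the net coefficient yields the first $\mathcal{O}\!\big(\lambda_{\min}/(Q\eta_g\sqrt{T})\big)$ term, while the accumulated residuals — each $\mathcal{O}(\eta_l)$ per round after rescaling, hence $\mathcal{O}(1/\sqrt{T})$ on average — give the $\mathcal{O}\!\big(\lambda_{\min}(C_2 + 2C_3)/\sqrt{T}\big)$ term.

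The step I expect to be the main obstacle is (iv) coupled with (i): simultaneously controlling the curve-approximation error and the staleness gap. The returned object $v^t_i$ is not a gradient but a displacement in $T\mathcal{M}_\Phi$ produced by $K$ rounds of Riemannian SGD on the embedding under a time-varying sampling law, then pushed through $\pi$ and re-exponentiated at the stale base point $\Theta^\tau$ rather than $\Theta^t$. Making the ``at most a constant degree worse than parallel transport'' hypothesis carry real weight means carefully propagating the immersion's distortion bounds through both the exponential map at $\Theta^\tau$ and the pullback-metric geodesic, and verifying that the data-dependent penalty $S^{t,\tau}$ does not covertly reintroduce an unbounded multiplicative factor; this geometric bookkeeping, rather than the SGD telescoping, is where the delicacy lies.
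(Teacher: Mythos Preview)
Your descent-lemma skeleton and the handling of staleness and $\pi$-correction (your (i)--(ii)) match the paper's treatment, but there is a genuine gap at step (iv): you do not actually have a mechanism for showing that the learned curve's initial tangent $\gamma'_{\phi_K}(0)$ is aligned with $-\grad\mathcal{L}(\Theta^t)$. Metric distortion of $\iota_\phi$ and bounded curvature let you compare the $\mathcal{M}_\phi$-geodesic to a straight segment in $\mathcal{M}_\Theta$, but a straight segment has no reason to point in the negative-gradient direction unless something in the \emph{learning process} forces it to. No component assumption in Appendix~\ref{appendix:proof} supplies this directly: Assumption~\ref{ass:quality} only controls the correction map relative to parallel transport of whatever tangent the client produced; it says nothing about that tangent's inner product with $\grad\mathcal{L}$.

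The paper closes this gap by a sandwich argument on the path-integral objective $G_{i,\Theta}(\phi) = \int_0^1 \mathcal{L}(\iota_\phi(\gamma(t)))\,dt$. The lower jaw comes from integrating the $L$-smoothness inequality along the curve, giving
\[
  G_{i,\Theta}(\phi_K) \ge \mathcal{L}(\Theta) + \tfrac12\angles{\grad\mathcal{L}(\Theta),\gamma'_{\phi_K}(0)} - \tfrac{L_\Theta+GC}{6}\norm{\gamma'_{\phi_K}(0)}^2 .
\]
The upper jaw is the standard SGD descent on $\mathcal{M}_\Phi$: since $\phi_0$ is the \emph{point} parametrisation, $G_{i,\Theta}(\phi_0)=\mathcal{L}(\Theta)$, and $K$ local steps give $\mathbb{E}G_{i,\Theta}(\phi_K)\le \mathcal{L}(\Theta) - (\eta_l - \mathcal{O}(\eta_l^2))\sum_k\mathbb{E}\norm{\grad G(\phi_k)}^2 + \mathcal{O}(\eta_l^2)$. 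Combining the two forces $\angles{-\grad\mathcal{L}(\Theta),\gamma'_{\phi_K}(0)}$ to be bounded \emph{below} by $\sim 2\eta_l\sum_k\norm{\grad G(\phi_k)}^2$ up to curvature and variance corrections. This is the missing alignment estimate, and it is what feeds the $-T_1$ term in the global descent inequality. Correspondingly, the paper's telescoping first produces a bound on $\frac{1}{T}\sum_t\mathbb{E}\norm{\grad G_{i,\Theta^t}(\phi^t)}^2$; the factor $\lambda_{\min}$ enters only at the very last step, via the immersivity Assumption~\ref{ass:immersion} applied to $\iota_\omega$ at the initial $\phi$, to convert $\norm{\grad G}^2 \ge \lambda_{\min}^{-1}\norm{\grad\mathcal{L}}^2$ --- not, as you suggest, as a geodesic-comparison constant inside the bias decomposition.
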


\begin{proof}
    See Appendix \ref{appendix:proof} for details.
\end{proof}

\section{EXPERIMENTAL ANALYSIS}

We develop a fork of the Flower FL framework \citep{flower} which handles asynchronous client updates, evaluating \textsc{AsyncBezier} against a number of baseline methods across a variety of datasets.

\subsection{Models and Datasets Used}

We focus on three datasets, each with a different style of task, utilising different model architectures. For full details of each scenario, please see Appendix \ref{sec:expdet}.

\textbf{FEMNIST} \citep{emnist}: The canonical OCR dataset on 62 handwritten characters, using preprocessed versions from the LEAF suite \citep{leaf}. We train a simple 2-conv, 2-dense CNN.

\textbf{Shakespeare} \citep{leaf}: Again from LEAF, performing character-level sequence prediction on a corpus of Shakespeare plays. For this task, we apply a small, 6-head, GPT 2-like \citep{gpt2} transformer.

\textbf{CXR8} \citep{cxr8}: Black-and-white chest X-Ray images, labelled for 8 conditions (including \textit{cardiomegaly} and \textit{pneumothorax}) as a multi-hot vector. We test fine-tuning a ShuffleNet V2 (x1.5) \citep{shufflenet}, using PyTorch's pre-trained ImageNet \citep{imagenet} weights.

The proposed \textsc{AsyncBezier} is then evaluated against 4 representative baselines: \textsc{FedAsync} \citep{fedasync}, DC-ASGD \citep{fedasync}, \textsc{FedBuff} \citep{fedbuff}, and \textsc{AsyncFedED} \citep{asyncfeded}. In addition, to evaluate its influence on our proposal's performance, we implement the standard \textsc{FedAsync} algorithm with the \textsc{OrthoDC} correction rule, terming this \textsc{FedOrtho} where $\vartheta = 1$ and \textsc{FedGS} where $\vartheta = 0$. We differentiate between two versions of our proposed algorithm, with \textsc{AsyncBezierED} using $\alpha = 1$ in the staleness decay parameter and $\alpha = 0$ used otherwise. For the purposes of side-by-side comparison in this paper, we focus only on those methods which are at their core ``SGD-like'' in the update rule, so exclude those proposals which introduce momentum terms and further hyperparameters to tune.

\subsection{Results}

\begin{table*}[t]
  \centerline{
  \begin{minipage}[c]{0.49\textwidth}
    \centering
    \small
    \begin{tabular}{c | c | c}
        \Xhline{2\arrayrulewidth}
        
        \multicolumn{3}{c}{\rule{0pt}{3ex}\textbf{(a) FEMNIST}  \rule[-1.5ex]{0pt}{0pt}} \\
        \Xhline{2\arrayrulewidth}
        \rule{0pt}{3ex}
        \textbf{Method} & \textbf{Test Acc. (\%)} & \textbf{$T_e$}  \rule[-1.5ex]{0pt}{0pt}\\
        \hline
        \rule{0pt}{3ex}
        \textsc{FedAsync} & $85.01 \pm 0.11$ & $137 \pm 6.6$ \\
        \textsc{FedOrtho} & $84.83 \pm 0.08$ & $133 \pm 3.4$ \\
        \textsc{FedGS} & $85.38 \pm 0.14$ & $149 \pm 1.2$ \\
        \textsc{DC-ASGD} & $85.25 \pm 0.17$ & $135 \pm 1.6$ \\
        \textsc{FedBuff} & $84.62 \pm 0.35$ & $174 \pm 2.1$ \\
        \textsc{AsyncFedED} & $85.48 \pm 0.29$ & $114 \pm 5.7$ \\
        \textsc{Proposed} & $\textbf{85.82} \pm \textbf{0.14}$ & $130 \pm 2.6$ \rule[-1.5ex]{0pt}{0pt} \\
        \textsc{ProposedED} & $85.67 \pm 0.14$ & $\textbf{114} \pm \textbf{0.5}$ \rule[-1.5ex]{0pt}{0pt} \\
        \Xhline{2\arrayrulewidth}
    \end{tabular}
  \end{minipage}\begin{minipage}[c]{0.49\textwidth}
    \centering
    \small
    \begin{tabular}{c | c | c}
        \Xhline{2\arrayrulewidth}
         \multicolumn{3}{c}{\rule{0pt}{3ex}\textbf{(b) \textsc{Shakespeare}}  \rule[-1.5ex]{0pt}{0pt}}\\
        \Xhline{2\arrayrulewidth}
        \rule{0pt}{3ex}
        \textbf{Method} & \textbf{Test Acc. (\%)} & \textbf{$T_e$}  \rule[-1.5ex]{0pt}{0pt}\\
        \hline
        \rule{0pt}{3ex}
        \textsc{FedAsync} & $50.60 \pm 0.06$ & $296 \pm 10.0$ \\
        \textsc{FedOrtho} & $52.76 \pm 0.54$ & $202 \pm 14.0$ \\
        \textsc{FedGS} & $52.87 \pm 0.18$ & $209 \pm 11.0$ \\
        \textsc{DC-ASGD} & $52.01 \pm 0.06$ & $230 \pm 8.5$ \\
        \textsc{FedBuff} & $50.84 \pm 0.34$ & $287 \pm 13.0$ \\
        \textsc{AsyncFedED} & $53.03 \pm 0.29$ & $188 \pm 7.0$ \\
        \textsc{Proposed} & $52.07 \pm 0.05$ & $209 \pm 2.0$ \rule[-1.5ex]{0pt}{0pt} \\
        \textsc{ProposedED} & $\textbf{53.13} \pm \textbf{0.13}$ & $\textbf{164} \pm \textbf{2.5}$ \rule[-1.5ex]{0pt}{0pt} \\
        \Xhline{2\arrayrulewidth}
    \end{tabular}
  \end{minipage}
  }
  \vspace{1em}
  \caption{Percentage test set accuracy across methods for the FEMNIST and Shakespeare datasets.}
  \label{fig:shaketable}
\end{table*}

\begin{table*}[t]
  \centering
  \small
  \begin{minipage}[b]{\textwidth}
    \centering
    \begin{tabular}{c | c | c | c}
        \Xhline{2\arrayrulewidth}
         \multicolumn{4}{c}{\rule{0pt}{3ex}\textbf{CXR8 Macros}  \rule[-1.5ex]{0pt}{0pt}}\\
        \Xhline{2\arrayrulewidth}
        \rule{0pt}{3ex}
        \textbf{Method} & \textbf{Test Macro AUROC} & \textbf{Test Macro AUPRC}  & \textbf{$T_e$}  \rule[-1.5ex]{0pt}{0pt}\\
        \hline
        \rule{0pt}{3ex}
        \textsc{FedAsync} & $77.93 \pm 0.01$ & $25.72 \pm 0.21$ & $140 \pm 6.0$ \\
        \textsc{FedOrtho} & $77.91 \pm 0.13$ & $25.90 \pm 0.29$  & $134 \pm 2.5$ \\
        \textsc{FedGS} & $77.85 \pm 0.10$ & $\textbf{26.31} \pm \textbf{0.18}$ & $141 \pm 6.0$ \\
        \textsc{DC-ASGD} & $78.32 \pm 0.39$ & $26.06 \pm 0.34$ & $146 \pm 1.5$ \\
        \textsc{FedBuff} & $77.82 \pm 0.02$ & $25.89 \pm 0.14$ & $172 \pm 2.0$ \\
        \textsc{AsyncFedED} & $77.45 \pm 0.08$ & $25.37 \pm 0.13$ & $144 \pm 2.5$ \\
        \textsc{Proposed} & $\textbf{78.44} \pm \textbf{0.04}$ & $26.12 \pm 0.11$ & $132 \pm 6.8$ \rule[-1.5ex]{0pt}{0pt} \\
        \textsc{ProposedED} & $77.89 \pm 0.12$ & $26.11 \pm 0.12$ & $\textbf{116} \pm \textbf{9.0}$ \rule[-1.5ex]{0pt}{0pt} \\
        \Xhline{2\arrayrulewidth}
    \end{tabular}
  \end{minipage}

  \vspace{1em}
  \caption{Macro AUROC and AUPRC scores for each method across the 8 conditions in the CXR8 dataset.}
  \label{fig:cxrtab}
\end{table*}
Table \ref{fig:shaketable} shows the test set accuracy results for both our proposal and the baseline methods over the Shakespeare and FEMNIST datasets, with Table \ref{fig:cxrtab} showing the macro AUROC and AUPRC results for CXR8. To give an accurate impression the balance between accuracy at convergence and speed to reach a target error level, we choose an error (defined as 1 - AUROC for CXR8) threshold $e$ close to the converged value and report $T_e$, the number of communication rounds at which this threshold is reached.

Each model was trained for 360 communication rounds (720 total epochs, avg 24/client), with $e = 0.20, 0.50, 0.25$ for the FEMNIST, Shakespeare, and CXR8 datasets respectively. Each scenario was repeated with three different random seeds, with the means and standard deviations across runs being reported in the table.

We can make the following observations: \textbf{(1)} The optimal choice of delay-corrected update rule is sensitive to dataset. In particular, we see that different values of $\vartheta$ are optimal for \textsc{AsyncBezier} on different problems, illustrating the ways in which the geometric relationships between clients are task-dependent. \textbf{(2)} \textsc{AsyncBezier} (with optimal choice of $\alpha$) always outperforms \textsc{FedAsync}, with an average +1.05\% performance and -54 epochs to target error. \textbf{(3)} Indeed, our proposal outperforms every other baseline on every metric (by an average +.17\% performance advantage vs. the runner-up with -9 epochs) other than CXR8 AUPRC, where it ranks 3rd behind \textsc{AsyncFedED} and \textsc{FedGS}. The disparity between AUROC and  AUPRC results may be attributed to the difficulty of this task, especially for the lightweight ShuffleNet model, reflected in the poor overall performance of AUPRC scores, with high class imbalance and some conditions significantly harder to detect than others. This still provides a useful benchmark against less well-studied real-world datasets, although future work would evaluate the \textsc{AsyncManifold} method specialised to complex tasks with larger models that can achieve a higher baseline AUPRC score, since solution space geometry may exhibit more stable and transferable characteristics in this case. \textbf{(4)} The proposals based on \textsc{OrthoDC} usually outperform \textsc{FedAsync}, but the gains of \textsc{AsyncBezier} cannot solely be attributed to this since they still consistently have an advantage of an average +.41\% performance and -25 epochs vs. \textsc{FedGS}/\textsc{FedOrtho}. \textbf{(5)} Indeed, our proposal is the only, other than DC-ASGD, which outperforms naive \textsc{FedAsync} on every dataset. Our proposal also outperforms DC-ASGD on every dataset, by an average of .31\% accuracy/AUROC and 13 communication rounds. In general, we can attribute the superior performance to the greater fitness of our \textit{quadratic mode connection} hypothesis to dataset geometries than that of \textit{linear mode connection}. 

\subsubsection{Client Fairness}
\label{sec:fair}

When dealing with both statistical and size heterogeneity in client distributions, it is important to consider the equitable treatment of model performance on each dataset, even where they might be under-represented in the global loss function. We term this desirable property \textit{client fairness} \citep{mohri2019agnosticfederatedlearning}, and it is particularly relevant in the healthcare setting, where clients will often correspond to hospitals with different patient demographics \citep{HealthFL}.

\begin{figure*}[t]
    \begin{center}
    \makebox[\textwidth]{%
    \begin{tikzpicture}
        \begin{groupplot}[
            height=4cm,
            group style={
                group size=2 by 1,
                horizontal sep=1cm,
            },
            ymin=0,
            axis x line*=bottom,
            axis y line*=left,
            legend style={
                at={(0,-0.25)},
                anchor=north west,
                legend columns=4
            },
            legend to name=newlegend
        ]
        
        \nextgroupplot[
            ylabel={Gini Coefficient},
            ybar=0pt,
            bar width=6pt,
            symbolic x coords={FEMNIST, Shakespeare, CXR8},
            xtick=data,
            enlarge x limits=0.25,
            x=2.1cm
        ]
        \addplot+[draw=black!90, fill=black!30, error bars/.cd, y dir=both, y explicit]coordinates {(FEMNIST, 0.02289956) +- (0,0.00157218) (Shakespeare, 0.008691787979070434) +- (0, 0.00001652510228162228) (CXR8, 0.016995022842803322) +- (0, 0.00015780121169222186)};
        \addplot+[draw=brown!90, fill=brown!30, error bars/.cd, y dir=both, y explicit]  coordinates {(FEMNIST, 0.02526614) +- (0,0.00242492) (Shakespeare, 0.009972973522535743) +- (0, 0.00020162100965631126) (CXR8, 0.017032143581216574) +- (0, 0.0021495100356583212)};
        \addplot+[draw=blue!90, fill=blue!30, error bars/.cd, y dir=both, y explicit] coordinates {(FEMNIST, 0.02852245) +- (0,0.00391203) (Shakespeare, 0.008917136419143715) +- (0, 0.000799361541666049) (CXR8, 0.017937807123133365) +- (0, 0.0006355722344925895)};
        \addplot+[draw=red!90, fill=red!30, error bars/.cd, y dir=both, y explicit] coordinates {(FEMNIST, 0.02433821) +- (0,0.00103) (Shakespeare, 0.009450869982946895) +- (0, 0.00028982368674618144) (CXR8, 0.018545900309521093) +- (0, 0.00035447676063269344)};
        \addplot+[draw=pink!90, fill=pink!30, error bars/.cd, y dir=both, y explicit] coordinates {(FEMNIST, 0.02413592) +- (0,0.00127) (Shakespeare, 0.008849960398909479) +- (0, 0.0004695822268311967) (CXR8, 0.017979366648524727) +- (0, 0.0009181168672071395)};
        \addplot+[draw=green!90, fill=green!30, error bars/.cd, y dir=both, y explicit]   coordinates {(FEMNIST, 0.02488981) +- (0,0.00154393) (Shakespeare, 0.00896857142172628) +- (0, 0.00016922723974362262) (CXR8, 0.01807529447391472) +- (0, 0.0005138300188794395)};
        \addplot+[draw=orange!90, fill=orange!30, error bars/.cd, y dir=both, y explicit]  coordinates {(FEMNIST, 0.024212) +- (0, 0.000718) (Shakespeare, 0.008776672185610093) +- (0, 0.000042486470082643384) (CXR8, 0.016995927781812227) +- (0, 0.0016374658055391348)};
        \addplot+[draw=purple!90, fill=purple!30, error bars/.cd, y dir=both, y explicit]  coordinates {(FEMNIST, 0.02516016) +- (0,0.00158741) (Shakespeare, 0.00967133067487672) +- (0, 0.00022800730692326774) (CXR8, 0.017722335685902937) +- (0, 0.0012535731413390316)};

        \nextgroupplot[
            ylabel={Theil Index},
            ybar=0pt,
            bar width=6pt,
            symbolic x coords={FEMNIST, Shakespeare, CXR8},
            xtick=data,
            enlarge x limits=0.25,
            x=2.1cm
        ]
        \addplot+[draw=black!90, fill=black!30, error bars/.cd, y dir=both, y explicit]  coordinates {(FEMNIST, 0.00083045) +- (0,0.00010799) (Shakespeare, 0.00011802526429843854) +- (0, 0.0000012723701492453176) (CXR8, 0.0004457162702341135) +- (0, 0.000008717774647910235)}; 
        \addplot+[draw=brown!90, fill=brown!30, error bars/.cd, y dir=both, y explicit]  coordinates {(FEMNIST, 0.00102926) +- (0,0.00020227) (Shakespeare, 0.00015183396786905387) +- (0, 0.000006962490026986374) (CXR8, 0.0004736665191598803) +- (0, 0.00011816675516907905)};
        \addplot+[draw=blue!90, fill=blue!30, error bars/.cd, y dir=both, y explicit]    coordinates {(FEMNIST, 0.00136546) +- (0,0.00037477) (Shakespeare, 0.00013010717617673682) +- (0, 0.000019248552731944615) (CXR8, 0.0005038596623018956) +- (0, 0.00002868721326809045)};
        \addplot+[draw=red!90, fill=red!30, error bars/.cd, y dir=both, y explicit]     coordinates {(FEMNIST, 0.00094224) +- (0,0.0000723) (Shakespeare, 0.00014111415834195236) +- (0, 0.000008175799613544305) (CXR8, 0.0005357433465720251) +- (0, 0.000025743186828300823)};
        \addplot+[draw=pink!90, fill=pink!30, error bars/.cd, y dir=both, y explicit]     coordinates {(FEMNIST, 0.00093182)+- (0,0.0000935)  (Shakespeare, 0.0001247802937946454) +- (0, 0.0000143328507657223) (CXR8, 0.0005048489556049418) +- (0, 0.00004545543563247114)};
        \addplot+[draw=green!90, fill=green!30, error bars/.cd, y dir=both, y explicit]   coordinates {(FEMNIST, 0.00100771) +- (0,0.00013821) (Shakespeare, 0.0001268099330517064) +- (0, 0.000004600841528695486) (CXR8, 0.000502879457602122) +- (0, 0.000022625417713220886)};
        \addplot+[draw=orange!90, fill=orange!30, error bars/.cd, y dir=both, y explicit]  coordinates {(FEMNIST, 0.00094124) +- (0, 0.0000572) (Shakespeare, 0.00012324139117396714) +- (0, 0.0000016878395292316888) (CXR8, 0.0004603695961885022) +- (0, 0.00008766612048135097)};
        \addplot+[draw=purple!90, fill=purple!30, error bars/.cd, y dir=both, y explicit]  coordinates {(FEMNIST, 0.00103877) +- (0, 0.00014611) (Shakespeare, 0.0001483205747134278) +- (0, 0.0000065044381169181555) (CXR8, 0.0004920343734448567) +- (0, 0.00005594334801342532)};
        
        \legend{\textsc{Proposed}, \textsc{ProposedED}, \textsc{FedAsync}, \textsc{FedOrtho}, \textsc{FedGS}, \textsc{DC-ASGD}, \textsc{FedBuff}, \textsc{AsyncFedED}}
        
        \end{groupplot}
        \end{tikzpicture}
        }
    \end{center}
    \begin{center}
    \vspace{-0.5em}
    \pgfplotslegendfromname{newlegend}
    \end{center}
    \caption{Bar plots of (unweighted) Gini Coefficient and Theil Index computed for each method over the model performance on each client's validation set.}
    \label{fig:fairnessplot}
\end{figure*}

Following \cite{fedmode}, we borrow two classical econometric formulae for calculating the ``inequality'' of a sampled distribution that goes beyond simple variance analysis: the \textit{Gini Coefficient} and \textit{Theil Index} (see Appendix \ref{sec:faircalc}). Figure \ref{fig:fairnessplot} shows these values computed according to the Accuracy/Macro AUROC value distribution for the best performing global model across the decentralised client validation sets; we note that the two metrics broadly agree on the ordering of methods, with the Theil index showing slightly more sensitivity.

There is comparatively little consistent variation amongst the methods, with \textsc{FedOrtho}, \textsc{FedGS}, and \textsc{DC-ASGD} in particular all close together. The -ED variants both show a consistent poorer performance and higher variance than their respective non-scaling counterparts (most noticeable in \textsc{AsyncBezierED}), which is expected due to their intentional down-weighting (to varyig degrees) of certain straggling clients.

Our proposal (with $\alpha = 0$) consistently shows a slight improvement over all other baselines, with an average of $4.7 \times 10^{-4}$ Gini coefficient and $4.0 \times 10^{-5}$ Theil index. We conjecture this may be attributable to the generalisable minima-seeking behaviour of the curve learning process. This shows the clear promise of our framework for applications in the aforementioned medical contexts, along with its strong performance on the CXR8 dataset.

\subsubsection{Effect of Local Epoch Counts}
\label{sec:localepochs}

An important consideration when weighing the use of \textsc{AsyncBezier} is whether the computational overhead from curve-fitting epochs is worth it for the increased communication efficiency, when these epochs could instead be allocated to standard pointwise SGD. To investigate the effect of increased local SGD epochs on final method performance, we re-run FEMNIST training on each of the methods (excluding \textsc{AsyncFedED}) for $T = 360$ communication rounds with each of four different epoch counts. For \textsc{AsyncBezier} we use $\min(K,2)$ curve-fitting epochs when running with $K$ SGD epochs.

\begin{figure}[ht]
    \centering
    
    \begin{tikzpicture}
        \begin{axis}[
            xlabel={Local Epoch Count $K$},
            ylabel={FEMNIST Test Accuracy (\%)},
            xmin=1, xmax=10,
            ymin=82, ymax=87,
            xtick={1,2,5,10},
            ytick={83, 84, 85, 86, 87},
            legend style={
                anchor=south east,
                at={(1, 0)},
                legend columns=1,
                nodes={scale=0.9, transform shape}
            },
            ymajorgrids=true,
            grid style=dashed,
        ]
        \pgfplotsset{
        every axis plot post/.append style={line width=0.75pt}
             }
        \addplot[color=black,  mark=square*]
        coordinates { (1,84.883)(2,85.82)(5,86.22)(10,86.4353) };
        \addplot[color=brown,  mark=square*]
        coordinates { (1,84.9436)(2,85.67)(5,85.8098)(10,85.9924) };
        \addplot[color=blue,  mark=square*]
        coordinates { (1,83.4429)(2,85.01)(5,85.5167)(10,85.5994) };
        \addplot[color=red,  mark=square*]
        coordinates { (1,82.4154)(2,84.83)(5,85.4128)(10,85.5429) };
        \addplot[color=pink,  mark=square*]
        coordinates { (1,84.6382)(2,85.38)(5,85.4504)(10,85.5438) };
        \addplot[color=green,  mark=square*]
        coordinates { (1,84.549)(2,85.25)(5,85.8352)(10,85.6125) };
        \addplot[color=orange,  mark=square*]
        coordinates { (1,82.4129)(2,84.62)(5,85.5004)(10,85.67) };
        
        \legend{\textsc{Proposed}, \textsc{ProposedED}, \textsc{FedAsync}, \textsc{FedOrtho}, \textsc{FedGS}, \textsc{DC-ASGD}, \textsc{FedBuff}}   
        \end{axis}
    \end{tikzpicture}
    \caption{Accuracy of each method on FEMNIST after 360 communication rounds by local epoch count}
    \label{fig:epochs}
\end{figure}

Figure \ref{fig:epochs} shows the results of this investigation. As expected, every method sees mean gains of $1.33\%$ when moving from 1 to 2 epochs and $.45\%$ from 2 to 5, attributable in both to larger step sizes allowing greater progress towards convergence in the fixed $T$. When moving from 5 to 10 epochs, however, the gains for most methods are minimal ($\mu = .09\%$), with DC-ASGD even seeing a decline in accuracy of $.22\%$, attributable to client heterogeneity leading to divergences in the local gradients becoming compounded with the increased time between synchronisation steps. Crucially for this evaluation, our method outperforms every baseline at every $K$ value, with the $K = 2$ version of our proposal outperforming every other method regardless of local epoch count. In particular, it is more efficient to spend 2 epochs in pointwise SGD and 2 epochs in our curve learning procedure (as in the main results of this section) than it is to spend 5 total epochs in pointwise SGD and proceed by any other proposal. Furthermore, our method shows the greatest ability to take advantage of more local epochs, being the only one to reach over 86\% accuracy at higher counts. This suggests an improved capacity to handle divergent local gradients due to our consideration of local solution space geometry.

\section{CONCLUSION AND FUTURE WORK}

In this paper, we have developed \textsc{AsyncBezier}, a new AsyncFL algorithm augmenting SGD-based methods with greater knowledge of client loss landscape geometry, and proven its convergence by situating it within our \textsc{AsyncManifold} Riemannian framework. Our proposal is supported by a novel staleness correction method derived from orthogonal complement projection to minimise conflicting updates from heterogenous clients. In evaluations of both CNN and Transformer architectures on general-purpose and healthcare datasets, our proposal is shown to be empirically superior to strong baselines in terms of both accuracy, AUROC, and fairness. Whilst our method does introduce computational overhead compared to \textsc{FedAsync}, we have shown in Section \ref{sec:localepochs} that our curve learning procedure makes better use of computation budget for higher epoch counts than pure pointwise SGD.

Future work would include deeper analyses of more complex implementations of the \textsc{AsyncManifold} framework, especially on non-Euclidean underlying manifolds, including providing stronger convergence bounds with more specific method-wise assumptions. Applications of \textsc{AsyncBezier} to real-world healthcare contexts would in turn be an important next step from the promising evaluation on the CXR8 dataset, especially where it is deployed on larger clusters with very high numbers of resource-constrained clients or in conjunction with mechanisms for ensuring differential privacy.


\bibliographystyle{unsrtnat}
\bibliography{refs}

\clearpage
\appendix
\thispagestyle{empty}

\onecolumn
\aistatstitle{Aggregation on Learnable Manifolds for Asynchronous Federated Optimisation:
Supplementary Materials}

\section{PROOF OF CONVERGENCE}
\label{appendix:proof}

We begin with the standard assumptions of non-convex optimisation, lifted to the Riemannian context with appropriately adjusted definitions:

\begin{assumption}[L-Smooth Loss]
    \label{ass:lsl}
    There exists a constant $L_\Theta$ such that:
    \begin{equation*}
        \norm{\grad \mathcal{L}(X) - P_{X \to Y}[\grad \mathcal{L}(Y)]} \leq L_\Theta\norm{X - Y}
    \end{equation*}
    For all $X,Y \in \mathcal{M}_\Theta$
\end{assumption}

\begin{assumption}[Bounded Loss Gradient]
    There exists some constant $G$ such that $\norm{\grad\mathcal{L}(\Theta)} \in [0, G]$ for all $\Theta \in \mathcal{M}_\Theta$. The unbiased gradient estimates used for stochastic local steps should also have norm upper bounded by $G$.
\end{assumption}

For simplicity in this paper, we will adopt the following ``weakly homogenous'' setting, which assumes that stochastic gradients w.r.t. $\mathcal{L}_i$ are an unbiased estimator for $\grad \mathcal{L}$.
\begin{assumption}[Unbiased Client Heterogeneity]
\label{ass:boundheterogeneity}
    We have that the local stochastic gradients of the cost function, taken across both the choice of client index and the local entropy during the training step, are unbiased estimators for the global cost. In particular, the expectation of the local stochastic gradient equals the true global gradient.
\end{assumption}
Formally, the cost function in question in the previous assumption is the once whose variance is bounded in:
\begin{assumption}[Bounded Stochastic Divergence from Geodesic]
\label{ass:variance}
    Suppose that local steps at time step $t$ are taken against the cost function:
    \begin{equation}
        \tilde{G}_t(\phi) := \int_{\tilde{\mathcal{M}}_t \subseteq \mathcal{M}} \mathcal{L}(\iota_\phi(x)) \; dp_t(x)
    \end{equation}
    For some probability distribution $p_t$ on $\tilde{\mathcal{M}}_t$, chosen as some subset of $\mathcal{M}$. Then there exists some constants $\sigma_1,\sigma_2$ such that:
    \begin{equation*}
        \mathbb{E}\norm{\grad \tilde{G}_t(\phi) - \grad G(\phi)}^2 \leq \sigma_1^2 + \sigma_2^2\norm{\grad G(\phi)}^2
    \end{equation*}
    Where:
    \begin{equation}
        G(\phi) := \int^0_1 \mathcal{L}(\iota_\phi(\gamma_t(\lambda))) \;  d\lambda
    \end{equation}
    For $\gamma_t$ the geodesic connecting $\iota_\phi^{-1}(\Theta^t) \to \omega$.
\end{assumption}

This modification to the standard bounded stochastic variance assumption seems quite strong on ($n > 1$)-dimensional manifolds, but can be achieved in a number of ways leveraging smoothness and shrinking off-geodesic volume. This is a product of the ``ephemerality'' of the learned manifolds being used to compute steps rather than as part of an effort to learn a low-loss manifold in itself.

Next, we need to bound the reasonableness of functions chosen in the \textsc{AsyncManifold} instantiation:
\begin{assumption}[Lipschitz and Bounded Curvature Embedding]
    \label{ass:lipschitz}
    There exists a constant $M_\Phi$ such that, for all $x,y \in \mathcal{M}$ and $\phi,\psi \in \mathcal{M}_\Phi$:
    \begin{equation}
        \norm{\iota(x,\phi) - \iota(y,\psi)} \leq M_\Phi\norm{(x,\phi) - (y,\psi)}
    \end{equation}
    $\iota$ should also be L-smooth, and from this we have $L$-smoothness of the lifted loss:
    \begin{equation*}
        \hspace{-1em} \norm{\grad (\mathcal{L}\iota)(\phi,x) - P^{(\phi,x)}_{(\psi,y)}[\grad (\mathcal{L}\iota)(\psi,y)]} \leq L_\Phi\norm{X - Y}
    \end{equation*}
    Finally, the operator norm of the second fundamental form (geodesic curvature) of $\iota_\phi$ should be uniformly bounded for any $\phi \in \mathcal{M}_\Phi$ and any $x \in \mathcal{M}$:
    \begin{equation}
        \norm{\mathrm{I\!I}_{\iota_\phi}(x)}_\text{op} \leq C
    \end{equation}
\end{assumption}
\begin{assumption}[Embedding Immersivity]
    \label{ass:immersion}
    $\iota_\omega : \mathcal{M}_\Phi \rta \mathcal{M}_\Theta$ should be an immersion for any $\omega \in \mathcal{M}$. This ensures local injectivity of the differential map, and we furthermore enforce that the smallest eigenvalue of its adjoint is bounded everywhere uniformly above zero by $\sqrt{|\lambda_\text{min}|}$.
\end{assumption}

The following assumption quantifies the ``well-behavedness'' of our delay correction procedure: we should finish with a stepping tangent which is at most a constant times worse as an approximation to $\grad \mathcal{L}(Y)$ than the parallel transport:
\begin{assumption}[Delay Correction Quality]
    \label{ass:quality}
    Let $\gamma_{\Theta,\phi}$ denote the $\iota_\phi^{-1}(\Theta) \to \omega$ geodesic for a given parametrisation $\phi$ and let $(\iota \circ \gamma)_{\Theta,\phi}$ denote its embedding into $\mathcal{M}_\Theta$. Then  there exists some constant $Q$ such that, for any $\phi \in \mathcal{M}_\Phi$, $X,Y \in \mathcal{M}_\Theta$:
    \begin{align}
        \angles{\grad\mathcal{L}(Y), (\iota \circ \gamma)_{Y,\pi(X,Y,\phi)}'(Y)} \\ \geq Q\angles{\grad\mathcal{L}(Y), P_{X \to Y}[(\iota \circ \gamma)_{X,\phi}'(X)]} \nonumber
    \end{align}
\end{assumption}

We ensure that clients will always participate with at most finite gaps:

\begin{assumption}[Bounded Staleness]
    \label{ass:stale}
    Suppose an update from client $i$ arrives at time $\tau$, with the local copy of the client model being $\Theta^t$. Then $\mathbb{E}\left[\norm{\Theta^\tau - \Theta^t }\mid\Theta^t\right] \leq S\max_{t' \in [t..\tau-1]} \norm{\gamma_{\phi^{t'}_k}'(0)}$.
\end{assumption}

Note that the above constraint is immediately implied by \textit{client ergodicity} where, as $T \rta \infty$, every client participates infinitely often in the updates, with non-vanishing probability. In the heterogenous client distribution setting, this ergodicity assumption would be required explicitly to ensure convergence of the global loss. 

For completeness, we reproduce the statement of the theorem, with the full definition of the constants $C_{\{1,2,3\}}$:

\setcounter{theorem}{0}
\begin{theorem}[Convergence of \textsc{AsyncManifold}]
    The \textsc{AsyncManifold} algorithm, with no SWA, assumptions as above, and the local learning rate $\eta_l = \mathcal{O}(\frac{1}{\max\{2C_1,\sqrt{T}\}})$, converges with:
    \begin{align}
        \frac{1}{T}\sum^T_{t = 0} \mathbb{E}\norm{\grad \mathcal{L}(\Theta^t)}^2 &\leq \mathcal{O}\left(\frac{\lambda_\text{min}}{Q\eta_g\sqrt{T}}\left[\mathcal{L}(\Theta^0) - \mathbb{E}\mathcal{L}(\Theta^T)\right]\right) \nonumber\\ & \;\;\;+ \mathcal{O}\left(\frac{\lambda_\text{min}}{\sqrt{T}}(C_2 + 2C_3)\right) \nonumber
    \end{align}
    Where:
    \begin{align}
        C_1 &:= \frac{(1 + \sigma^2_2)L_\Phi}{2} - KL_\iota^2(L_\Theta + GC)\left(\frac{1}{6} + \frac{\eta_g(1+\alpha(S^{-1} - 1))}{4Q}\right) \nonumber\\
        C_2 &:= \frac{1}{\beta}\left[(1-\alpha)\bar{\eta}_g L_\Theta SK^2L_\iota^2G^2  + \alpha L_\Theta K^2L_\iota^2G^2\right] \nonumber\\
        C_3 &:=  KL_\Phi\sigma_1^2 \;\;\;\;\;\;\;\;\;\;\;\; \beta := 1+\alpha(S^{-1} - 1) \nonumber
    \end{align}
\end{theorem}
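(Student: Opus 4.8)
The plan is to run the standard non-convex Riemannian SGD descent argument on $\mathcal{M}_\Theta$, treating the composition of the client manifold-learning step, the server delay correction $\pi$, and the geodesic aggregation step as a single biased, stochastic pseudo-gradient oracle for $\grad\mathcal{L}$. First I would apply the $L_\Theta$-smoothness descent inequality (Assumption \ref{ass:lsl}) along the curve $t\mapsto\exp_{\Theta^\tau}^{\psi^\tau}(t\,\xi^\tau)$ realising one aggregation step, writing $\mathcal{L}(\Theta^{\tau+1}) \leq \mathcal{L}(\Theta^\tau) + \angles{\grad\mathcal{L}(\Theta^\tau), \xi^\tau} + \tfrac{L_\Theta}{2}\norm{\xi^\tau}^2 + R^\tau$, where $\xi^\tau$ is the $\mathcal{M}_\Theta$-tangent of the step and $R^\tau$ collects the discrepancy between moving along an \emph{embedded} geodesic and along a genuine $\mathcal{M}_\Theta$-geodesic; this discrepancy is controlled by the second-fundamental-form bound $\norm{\mathrm{I\!I}_{\iota_\phi}(x)}_\mathrm{op}\leq C$ of Assumption \ref{ass:lipschitz}, and is the source of the $GC$ contribution inside $C_1$.

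Next I would lower-bound $-\angles{\grad\mathcal{L}(\Theta^\tau), \xi^\tau}$ by a positive multiple of $\norm{\grad\mathcal{L}(\Theta^\tau)}^2$; this is the heart of the argument and splits into three linked steps. (i) Unroll the $K$ client Riemannian SGD iterations on $\mathcal{M}_\Phi$ against $\tilde{G}_t$ to show that the transmitted displacement $v_i^t = \left(\exp_{\phi_{\Theta^t}}\right)^{-1}(\phi^K)$ is, in conditional expectation over client index and local entropy, aligned with $-\grad G(\phi)$ up to the additive and multiplicative variance of Assumption \ref{ass:variance} and the $L_\Phi$-smoothness of the lifted loss; the additive $\sigma_1^2$ accumulated over the $K$ local steps, scaled by $L_\Phi$, becomes $C_3 = KL_\Phi\sigma_1^2$, while $\sigma_2^2$ rescales the leading coefficient of $C_1$. (ii) Transfer from $\grad G(\phi)$ on $\mathcal{M}_\Phi$ to $\grad\mathcal{L}$ on $\mathcal{M}_\Theta$ through the immersion $\iota$, using the uniform lower bound $\sqrt{|\lambda_\mathrm{min}|}$ on the smallest eigenvalue of the adjoint of $d\iota_\omega$ (Assumption \ref{ass:immersion}) so that the differential does not collapse the gradient — this is what puts the $\lambda_\mathrm{min}$ prefactor into the bound. (iii) Invoke Assumption \ref{ass:quality} so the delay-corrected embedded curve tangent at $\Theta^\tau$ is at least a factor $Q$ as good an ascent direction for $\mathcal{L}$ as the parallel transport $P_{\Theta^t\to\Theta^\tau}$ of the curve tangent at $\Theta^t$. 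Combining (i)--(iii) with the arc-length reparametrisation of the aggregation step — which guarantees the effective $\mathcal{M}_\Theta$-step size really is on the order of $\eta_g$ rather than being distorted by the pullback metric — produces the descent term with the $Q/\lambda_\mathrm{min}$ dependence seen in the first $\mathcal{O}(\cdot)$.

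The remaining work is bookkeeping of error terms. The asynchronicity is handled by substituting $\grad\mathcal{L}(\Theta^t) = P_{\Theta^\tau\to\Theta^t}\grad\mathcal{L}(\Theta^\tau) + E^{t,\tau}$ and bounding $\norm{E^{t,\tau}}$ via $L_\Theta$-smoothness together with the bounded-staleness Assumption \ref{ass:stale}; since $\norm{\Theta^\tau - \Theta^t}$ is controlled there by the largest curve tangent $\norm{\gamma'(0)}$, which is itself on the order of $K\eta_l L_\iota G$ after one client round by the bounded-gradient assumption and the Lipschitz bound on $\iota$, this feeds the $SK^2L_\iota^2G^2$ terms in $C_2$; the split between the two brackets of $C_2$ is exactly the $(1-\alpha)$ versus $\alpha$ decomposition of the staleness penalty $S^{t,\tau}$, and the $\beta = 1+\alpha(S^{-1}-1)$ normaliser is what remains of $S^{t,\tau}$ in expectation after the reparametrisation. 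Summing the per-round inequality from $t=0$ to $T$, telescoping $\mathcal{L}(\Theta^0)-\mathbb{E}\mathcal{L}(\Theta^T)$, dividing by $T$, and choosing $\eta_l = \mathcal{O}(1/\max\{2C_1,\sqrt{T}\})$ so that the coefficient of $\tfrac{1}{T}\sum_t\mathbb{E}\norm{\grad\mathcal{L}(\Theta^t)}^2$ stays bounded below by a positive constant — which is precisely why $C_1$ (built from the curvature and smoothness penalties above) must be subtracted off in the learning-rate cap — yields the claimed $\mathcal{O}(1/\sqrt{T})$ rate.

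The main obstacle I anticipate is step (ii)--(iii) above: cleanly transferring a descent guarantee across the learned embedding $\iota$ while simultaneously absorbing the curvature of the induced metric $g_\phi$ on $\mathcal{M}$, the non-injectivity of $\iota_\phi$ on $\mathcal{M}_\Phi^0$ (and the attendant abuse of $\iota_\phi^{-1}$), and the fact that $\pi$ supplies only a one-sided inner-product inequality rather than an approximate equality. Keeping all of these from degrading the per-round bound by worse than an $\mathcal{O}(1)$ amount — in particular arguing that the sampling schedule for $\mathbf{P}$ (a Dirac at $\omega$ for the first $K_1$ rounds, then $\mathcal{U}[0,1]$) makes $\grad G(\phi)$ a faithful stand-in for the geodesic-averaged loss gradient, and that the immersivity margin $\lambda_\mathrm{min}$ does not deteriorate as the local and global models drift apart — is where the delicate part of the analysis lies.
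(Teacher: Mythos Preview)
Your overall architecture---smoothness descent on $\mathcal{M}_\Theta$, lower-bound the inner product, control the staleness via Assumption~\ref{ass:stale}, telescope, and fix $\eta_l$---matches the paper, and you correctly attribute each constant to its source assumption. The difference lies in how the inner product $\angles{\grad\mathcal{L}(\Theta^t),\gamma'_{\phi_K}(0)}$ gets lower-bounded, and here your steps (i)--(ii) diverge from the paper's argument in a way that leaves a gap.

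The paper does \emph{not} try to show directly that the $\mathcal{M}_\Theta$-step tangent $\gamma'_{\phi_K}(0)$ is (approximately) the pushforward $D\iota_\omega(v_i^t)$ of the SGD displacement, which is what your ``transfer through the immersion'' step (ii) implicitly requires. Instead it runs a \emph{sandwich} on the curve-averaged loss $G_{i,\Theta}(\phi)=\int_0^1\mathcal{L}(\gamma_\phi(t))\,dt$: a lower bound $G_{i,\Theta}(\phi_K)\geq\mathcal{L}(\Theta)+\tfrac12\angles{\grad\mathcal{L}(\Theta),\gamma'_{\phi_K}(0)}-\tfrac{L_\Theta+GC}{6}\norm{\gamma'_{\phi_K}(0)}^2$ by integrating the $L_\Theta$-smoothness inequality along $\gamma_{\phi_K}$, and an upper bound $G_{i,\Theta}(\phi_K)\leq G_{i,\Theta}(\phi_0)-(\eta_l-\eta_l^2\tfrac{(1+\sigma_2^2)L_\Phi}{2})\sum_k\norm{\grad G(\phi_k)}^2+\eta_l^2 C_3/2$ from the $K$ local SGD steps. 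Since $G_{i,\Theta}(\phi_0)=\mathcal{L}(\Theta)$, comparing the two yields $\angles{-\grad\mathcal{L}(\Theta),\gamma'_{\phi_K}(0)}\gtrsim\sum_k\norm{\grad G(\phi_k)}^2$ directly, without ever linearising the map $\phi\mapsto\gamma'_\phi(0)$. This is the mechanism that produces the $\tfrac16$ in $C_1$ and explains why the bound telescopes cleanly in $\sum_k\norm{\grad G}^2$.

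Your route would instead need an explicit relation between $\gamma'_{\phi_K}(0)$ and $D\iota_\omega(v_i^t)$, which is nontrivial because the curve tangent depends on $\phi$ through both the embedding $\iota_\phi$ and the base point $\iota_\phi^{-1}(\Theta)$; you flag exactly this as the anticipated obstacle, but do not supply the device that resolves it. Relatedly, the paper invokes the immersivity bound $\norm{\grad G(\phi_0)}^2\geq\lambda_{\min}^{-1}\norm{\grad\mathcal{L}(\Theta)}^2$ only at the very last step, \emph{after} telescoping on $\sum_t\sum_k\norm{\grad G(\phi_k^t)}^2$, rather than inside the per-round inequality as you propose; this reordering is what makes $\lambda_{\min}$ appear as a clean multiplicative prefactor on the whole right-hand side.
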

\begin{proof}[Proof of Theorem 1]
    We assume that the manifold parameters are trained by Riemannian SGD on $\mathcal{M}_\Phi$ for $K$ steps against the loss function:
    \begin{align}
        G_{i,\Theta}(\phi) := \int^1_0 \mathcal{L}(\gamma_\phi(t)) \; dt
    \end{align}
    Where $i$ is a given client index and $\gamma_\phi : [0,1] \to \mathcal{M}_\Theta$ is the constant-speed (scaled) geodesic connecting $\omega$ and $\iota_\phi^{-1}(\Theta^t)$, embedded under $\iota_\phi$. Notice that, by L-smoothness of the lifted loss and the fundamental theorem of calculus, this is L-smooth. By Assumption (\ref{ass:lsl}), we may bound the loss at $\phi$ from below:
    \begin{equation}
        \mathcal{L}_i(\gamma_\phi(t)) \geq \mathcal{L}_i(\gamma_\phi(0)) + \angles{\grad \mathcal{L}_i(\gamma_\phi(0)), t\gamma_\phi'(0)} -\frac{L_\Theta + GC}{2}t^2\norm{\gamma'(0)}^2 \label{eqn:lowerb}
    \end{equation}
    Where the $GC$ term comes from the difference $|\mathcal{L}(\exp_{\Theta}(t\gamma'(0))) - \mathcal{L}(\gamma(t))| \leq G\norm{\exp_{\Theta}(t\gamma'(0)) - \gamma(t)}$, which in turn is bounded by $\frac{GC}{2}t^2\norm{\gamma'(0)}^2$ due to Assumption \ref{ass:lipschitz}. Integrating over $t$ to find a bound on $G$:
    \begin{align}
        G_{i,\Theta}(\phi) &\geq \mathcal{L}_i(\gamma_\phi(0)) + \angles{\grad \mathcal{L}_i(\gamma_\phi(0)), \gamma_\phi'(0)}\int^1_0 t \; dt - \frac{L_\Theta}{2}\norm{\gamma_\phi'(0)}^2\int^1_0t^2 \; dt \\
        &= \mathcal{L}_i(\Theta) + \frac{1}{2} \angles{\grad \mathcal{L}_i(\gamma_\phi(0)), \gamma_\phi'(0)} - \frac{L_\Theta + GC}{6}\norm{\gamma_\phi'(0)}^2
    \end{align}
    We can bound the expectation for $\phi_k$:
    \begin{align}
        \mathbb{E}G_{i,\Theta}(\phi_k) &\geq \mathbb{E}\mathcal{L}_i(\Theta) - \underset{\Delta_1}{\underbrace{\left[\frac{1}{2} \mathbb{E}\angles{-\grad \mathcal{L}_i(\Theta), \gamma_{\phi_k}'(0)} + \frac{L_\Theta + GC}{6}\mathbb{E}\left[\norm{\gamma_{\phi_k}'(0)}^2\right]\right]}}
    \end{align} 
    Similarly, we can use the learning procedure to bound $G_\Theta(\phi))$ from above. By smoothness and the bounded variance Assumption \ref{ass:variance}:
    \begin{align}
        \mathbb{E}G_{i,\Theta}(\phi_{k+1}) &\leq G_{i,\Theta}(\phi_k) - \eta_l\angles{-\grad G_{i,\Theta}(\phi_k), \mathbb{E}g_{i,k}} + \frac{\eta_l^2 L_\Phi}{2}\mathbb{E}\left[\norm{g_{i,k}}^2\right] \\
        &\leq G_{i,\Theta}(\phi_k) - \eta_l\norm{\grad G_{i,\Theta}(\phi_k)}^2 + \frac{\eta_l^2 L_\Phi}{2}\left[(1+\sigma_2^2)\norm{\grad G_{i,\Theta}(\phi_k)}^2 + \sigma_1^2\right] \\
        &= G_{i,\Theta}(\phi_k) - \left(\eta_l - \eta_l^2\frac{(1+\sigma_2^2)L_\Phi}{2}\right)\norm{\grad G_{i,\Theta}(\phi_k)}^2 + \eta_l^2\frac{\sigma_1^2 L_\Phi}{2}
    \end{align}
    Telescoping the sum of $G(\phi_k) - G(\phi_{k+1})$ over $[K]$ yields:
    \begin{equation}
        \mathbb{E}[G_{i,\Theta}(\phi_k)] \leq G_{i,\Theta}(\phi_0) - \underset{\Delta_2}{\underbrace{\left(\eta_l - \eta_l^2\frac{(1 + \sigma_2^2)L_\Phi}{2}\right)\sum^K_{k = 0} \mathbb{E}\norm{\grad G_{i,\Theta}(\phi_k)}^2 + \eta_l^2\frac{KL_\Phi\sigma_1^2}{2}}}
    \end{equation}
    Recalling that $\phi_0$ is a point parametrisation, we have that $G_\Theta(\phi_0) = \mathcal{L}(\Theta)$. We can now combine these bounds, noticing that $\mathcal{L}(\Theta) - \Delta_1 \leq \mathcal{L}(\Theta) - \Delta_2$, hence $\Delta_1 \geq \Delta_2$:
    \begin{align}
        \frac{1}{2} \mathbb{E}\angles{-\grad \mathcal{L}_i(\Theta), \gamma_{\phi_k}'(0)} + \frac{L_\Theta + GC}{6}\mathbb{E}\left[\norm{\gamma_{\phi_k}'(0)}^2\right] \geq \left(\eta_l - \eta_l^2\frac{(1+\sigma_2^2)L_\Phi}{2}\right)\sum^K_{k = 0} \mathbb{E}\norm{\grad G_{i,\Theta}(\phi_k)}^2 - \eta_l^2\frac{KL_\Phi\sigma_1^2}{2} \label{eqn:32}
    \end{align}
    We can now apply the smoothness of $\mathcal{L}$ on $\mathcal{M}_\Theta$ to yield an upper bound in similar form to (\ref{eqn:lowerb}):
    \begin{align}
        \mathbb{E}\mathcal{L}(\Theta^{t+1}) &\leq \mathcal{L}(\Theta^t) - Q\underset{T_1}{\underbrace{\eta_g\mathbb{E}\Sigma^s(\alpha)\angles{-\grad\mathcal{L}_i(\Theta^t), P_{\Theta^t \rta \Theta^s}[\gamma'_{\phi^t_k}(0)]}}} +\eta_g^2\frac{L_\Theta + GC}{2}\mathbb{E}\left[\norm{\gamma'_{\phi^t_k}(0)}^2\right] \label{eqn:18} \\
        \text{where } \Sigma^s(\alpha) &:= 1 +\alpha\max\left[\frac{\norm{\gamma'(0)_{\phi^t_k}}}{\norm{\Theta^s - \Theta^t}} - 1, s - 1 \right]
    \end{align}
    Where we convert to a parallel transport term with Assumption \ref{ass:quality}. Rearranging $T_1$:
    \begin{align}
        T_1 &= \bar{\eta}_g\Sigma^s(\alpha)\angles{-\grad\mathcal{L}_i(\Theta^s) + P_{\Theta^t\to\Theta^s}[\grad \mathcal{L}_i(\Theta^t)] - P_{\Theta^t\to\Theta^s}[\grad \mathcal{L}_i(\Theta^t)], \mathbb{E}P_{\Theta^t \rta \Theta^s}[\gamma'_{\phi^t_k}(0)]} \nonumber \\
        &\geq \bar{\eta}_g\Sigma^s(\alpha)\angles{- P_{\Theta^t\to\Theta^s}[\grad \mathcal{L}_i(\Theta^t)], \mathbb{E}P_{\Theta^t \rta \Theta^s}[\gamma'_{\phi^t_k}(0)]} \nonumber\\
        & \;\;\;\; + \bar{\eta}_g\Sigma^s(\alpha)\angles{-\grad\mathcal{L}_i(\Theta^s) + P_{\Theta^t\to\Theta^s}[\grad \mathcal{L}_i(\Theta^t)], \mathbb{E}P_{\Theta^t \rta \Theta^s}[\gamma'_{\phi^t_k}(0)]} \nonumber\\
        &= \bar{\eta}_g\Sigma^s(\alpha)\angles{-\grad \mathcal{L}_i(\Theta^t), \bar{\eta}_g\mathbb{E}\gamma'_{\phi^t_k}(0)} - \bar{\eta}_g\underset{T_2}{\underbrace{\Sigma^s(\alpha)\angles{\grad\mathcal{L}_i(\Theta^s) - P_{\Theta^t\to\Theta^s}[\grad \mathcal{L}(\Theta^t)], \mathbb{E}P_{\Theta^t \rta \Theta^s}[\gamma'_{\phi^t_k}(0)]}}} \label{eqn:21}
    \end{align}
    We choose the global learning rate $\eta_g$ to ensure that $\norm{\eta_g\gamma'_{\phi_k}(0)} = \norm{\bar{\eta}_g\exp^{-1}_{\Theta}(\iota(\phi_k,\omega))}$. By the Lipschitz property of embedded diameter and the fact that $\phi_0$ is a point parametrisation, we have that:
    \begin{align}
        \norm{\exp^{-1}_\Theta(\iota(\phi_k,\omega))} \leq L_{\iota}\norm{\exp^{-1}_{\phi_0}(\phi_k)} \leq L_\iota\eta_l\sum^K_{k = 0} \norm{\grad G_{i,\Theta}(\phi_k)} \label{eqn:22}
    \end{align}
    Where the last inequality is by the geodesic triangle and AM-GM inequalities. This enables us to continue bounding $T_2$:
    \begin{align}
        T_2 &\leq \Sigma^s(\alpha(\norm{\grad\mathcal{L}_i(\Theta^s) - P_{\Theta^t\to\Theta^s}[\grad \mathcal{L}_i(\Theta^t)]} \cdot \norm{\mathbb{E}P_{\Theta^t \rta \Theta^s}[\gamma'_{\phi^t_k}(0)]} \\
        &\leq \left((1-\alpha) + \alpha\frac{\norm{\gamma'(0)_{\phi^t_k}}}{\norm{\Theta^s - \Theta^t}}\right)L_\Theta\norm{\Theta^s - \Theta^t}\cdot \norm{\mathbb{E}\gamma'_{\phi^t_k}(0)} \\
        &\leq (1-\alpha)\left[L_\Theta \sum_{i \in [t..s]} \eta_g\norm{\gamma'_i(0)}\norm{\mathbb{E}\gamma'_s(0)}\right] + \alpha \norm{\mathbb{E}\gamma'_{\phi^t_k}(0)}^2 \\
        &\leq (1-\alpha)\left[L_\Theta \sum_{i \in [t..s]}\left[\bar{\eta}_g KL_\iota^2\eta_l^2\sum_{k \in[K]} \norm{\grad G_{i,\Theta^i}(\phi^i_k)}^2\right]\right] + \alpha \norm{\mathbb{E}\gamma'_{\phi^t_k}(0)}^2 \\
        &\leq (1-\alpha)\eta_l^2\bar{\eta}_gL_\Theta SK^2L_\iota^2G^2  + \alpha\eta_l^2L_\Theta K^2L_\iota^2G^2\label{eqn:25}
    \end{align}
    Substituting (\ref{eqn:25}) into (\ref{eqn:21}), then accumulating into (\ref{eqn:18}) along with (\ref{eqn:32}):
    \begin{align}
        \mathbb{E}\mathcal{L}(\Theta^{t + 1}) &\leq \mathcal{L}(\Theta^t) - 2Q\eta_g\Sigma^s(\alpha)\left(\eta_l - \eta_l^2\frac{(1 + \sigma^2_2)L_\Phi}{2}\right)\sum^K_{k = 0} \mathbb{E}\norm{\grad G_{i,\Theta^t}(\phi_k^t)}^2 \\
        & \;\;\; + \left(\eta_g^2\frac{L_\Theta + GC}{2} + 2Q\Sigma^s(\alpha)\eta_g\frac{L_\Theta + GC}{6}\right)\mathbb{E}\norm{\gamma'_{\phi^t_k}(0)}^2 \\
        & \;\;\; + 2Q\Sigma^s(\alpha)\eta_g\eta_l^2\frac{KL_\Phi \sigma_1^2}{2} + Q\eta_gT_2 \\
        &\leq \mathcal{L}(\Theta^t) + 2Q\Sigma^s(\alpha)\eta_g\eta_l^2\underset{C_3}{\underbrace{KL_\Phi \sigma_1^2}} \nonumber\\
        &\;\;\;\; + Q\bar{\eta}_g\eta_l^2\Sigma^s(\alpha)\underset{C_2}{\underbrace{\frac{1}{1+\alpha(S^{-1} - 1)}\left[(1-\alpha)\bar{\eta}_g L_\Theta SK^2L_\iota^2G^2  + \alpha L_\Theta K^2L_\iota^2G^2\right]}} \nonumber\\
        &\;\;\;\; - 2Q\eta_g\eta_l\Sigma^s(\alpha)\left(1 - \eta_lC_1\right)\sum^K_{k = 0} \mathbb{E}\norm{\grad G_{i,\Theta^t}(\phi_k^t)}^2 \\
        \text{where } C_1 &:= \frac{(1 + \sigma^2_2)L_\Phi}{2} - KL_\iota^2(L_\Theta + GC)\left(\frac{1}{6} + \frac{\eta_g(1+\alpha(S^{-1} - 1))}{4Q}\right)
    \end{align}
    
    We rearrange and telescope over $[T]$ to find a convergence bound in terms of the Riemannian gradient on $\mathcal{M}_\Phi$:
    \begin{align}
        \frac{1}{T}\sum^T_{t = 0} \mathbb{E}\norm{\grad G_{i,\Theta^t}(\phi^t)}^2 &\leq \frac{\mathcal{L}(\Theta^0) - \mathbb{E}\mathcal{L}(\Theta^T)}{2TQ\eta_l\eta_g(1+\alpha(S^{-1} - 1))\left(1 - \eta_lC_1\right)} + \eta_l\frac{C_2 + 2C_3}{2(1-\eta_lC_1)} \label{eqn:27}
    \end{align}
    We need now to translate this to a bound w.r.t. $\mathcal{M}_\Theta$. Recalling that the differential (and hence its adjoint) are linear operators, by a standard linear algebraic argument we have:
    \begin{equation}
        \norm{(D\iota_\omega)^*_\phi(v)}^2 = \angles{(D\iota_\omega)^*_\phi(v), D\mathcal{L}_\Theta(v)} = \angles{v, ((D\iota_\omega)(D\iota_\omega)^*)_\phi(v)} \geq \lambda_\text{min}\norm{v}^2
    \end{equation}
    For $\lambda_\text{min}$ the smallest absolute eigenvalue of $D(\iota_\omega)_\Theta$. Accordingly:
    \begin{equation}
        \norm{\grad G_{i,\Theta^t}(\phi^t)}^2 = \norm{D(\iota_\omega)^*_\phi[\grad \mathcal{L}_i(\Theta^t)]}^2 \geq \frac{1}{\lambda_\text{min}}\norm{\grad \mathcal{L}_i(\Theta^t)}^2 \label{eqn:lambdabound}
    \end{equation}
    We substitute (\ref{eqn:lambdabound}) into (\ref{eqn:27}), simply multiplying by $\lambda_\text{min}$ (bounded above zero by Assumption \ref{ass:immersion}). 

    Notice that we have used $\mathcal{L}$ without considering the proximal term in this analysis. This is because our result bounds the loss gradient on $\mathcal{M}_\Theta$ at $\Theta^t$ by bounding the loss gradient on $\phi$ at $\phi^t_\text{init}$ - hence the proximal and raw losses coincide when evaluated at this point, so we can conclude a bound on the raw loss immediately, although we have technically abused notation referring to the client optimising over $\mathcal{L}$. The result then follows from an appropriate choice for $\eta_l$.
\end{proof}

\section{EXPERIMENTAL DETAILS}
\label{sec:expdet}

Experiments were run on two Nvidia RTX GPUs (1x 5070, 1x 3070), each simulating 15 clients. The scheduler accurately simulates varying asynchronous processing speeds by stochastically choosing clients to run from the queue according to expected length of local training - in our case primarily influenced by local dataset size - and current waiting time. Updates are processed on a central server thread and clients immediately dispatched back to the waiting pool with updated model weights. 

For each method implemented, we use a local Adam optimiser on the \textsc{FedProx} objective for 2 epochs with $\eta_l = \mu = 0.001$, only tuning global parameters of the aggregation framework.

The most influential hyperparameter is the choice of global learning rate $\eta_g$, which for all methods was found by line search over $\{0.25,0.5,1.0,1.5,2.0,2.5,3.0,5.0\}$ - see Table \ref{fig:lrtab} for the choices by method and dataset.

\begin{table}[ht]
  \centering
  \begin{minipage}[b]{\textwidth}
    \centering
    \begin{tabular}{c | c | c | c}
        \Xhline{2\arrayrulewidth}
         \multicolumn{4}{c}{\rule{0pt}{3ex}\textbf{Global Learning Rates ($\eta_g$)} \rule[-1.5ex]{0pt}{0pt}}\\
        \Xhline{2\arrayrulewidth}
        \rule{0pt}{3ex}
        \textbf{Method} & \textbf{FEMNIST} & \textbf{Shakespeare}  & \textbf{CXR8}  \rule[-1.5ex]{0pt}{0pt}\\
        \hline
        \rule{0pt}{3ex}
        \textsc{FedAsync} & 3.0 & 5.0 & 2.0 \\
        \textsc{FedOrtho} & 3.0 & 5.0 & 2.0 \\
        \textsc{FedGS} & 1.0 & 2.5 & 1.0 \\
        \textsc{DC-ASGD} & 1.0 & 2.5 & 1.0 \\
        \textsc{FedBuff} & 1.0 & 2.0 & 1.0 \\
        \textsc{AsyncFedED} & 0.25 & 1.5 & 0.5 \\
        \textsc{Proposed} & 0.5 & 1.5 & 0.5 \\
        \textsc{ProposedED} & 0.25 & 1.0 & 0.25 \\
        \Xhline{2\arrayrulewidth}
    \end{tabular}
  \end{minipage}

  \vspace{1em}
  \caption{Macro AUROC and AUPRC scores for each method across the 8 conditions in the CXR8 dataset.}
  \label{fig:lrtab}
\end{table}

\subsection{FEMNIST}

805,263 28$\times$28 black-and-white images, representing a single alphanumeric character (hence one of 62 classes). Samples were heterogenously partitioned into 30 clients according to the Dirichlet distribution ($\alpha = 0.5$) on class labels. 

Figure \ref{fig:femnistarch} shows the full CNN architecture used for this dataset (ReLU activations not shown).

\begin{figure}[h]
    \centering
    \includegraphics[width=0.35\linewidth]{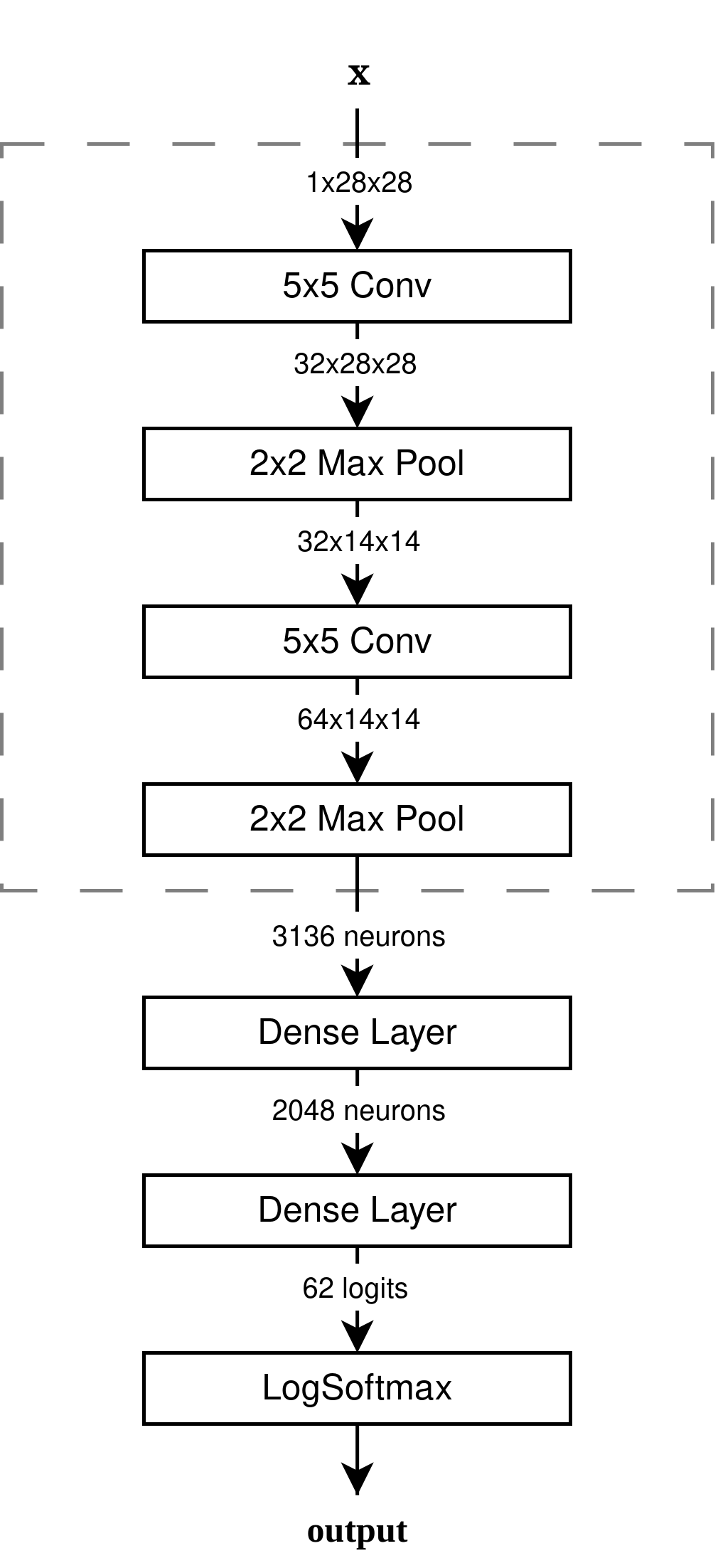}
    \caption{CNN Architecture for FEMNIST}
    \label{fig:femnistarch}
\end{figure}

Contributions from each client are weighted by proportion of dataset seen by that client. For DC-ASGD, the $\lambda_t$ parameter is set dynamically with $\lambda_0 = 2.0$, as proposed by \cite{dcasgd}. For \textsc{FedBuff}, we use $K = 10$ as recommended in \cite{fedbuff}. 

For \textsc{AsyncFedED}, we follow the original paper \citep{asyncfeded}, and use $\overline{\gamma} = 1.0, \kappa = 1$ (notice that $\lambda$ in their notation is subsumed by $\eta_g$ in ours). Increasing $\bar{gamma}$ to above 1 increases training stability, but increases wall-clock time far more and results in worse performance in communication round terms. We note that in the early stages, staleness computed according to their Equation (6) can exhibit high variance that can throw training off. Accordingly, we do not compute staleness dynamically until after a short ``warm-up'' period, using the modified:
\begin{equation}
    \tilde{\gamma}(i,t) = \begin{cases}
        \gamma(i,t) & t > 10 \\
        \overline{\gamma} & \text{otherwise}
    \end{cases}
\end{equation}
\textsc{AsyncFedED} is unique among methods tested in using adaptive per-client epoch counts. All our convergence rate results are computed according to communication round count as opposed to wall-clock time, but we do not notice much advantage given to the method, which achieves similar results to other baselines when measured according to communication rounds, despite taking far greater wall-clock time than \textsc{FedAsync}. We can possibly attribute this to the reduced performance of the \textsc{FedAsync} update rule as the number of local epochs increases outweighing any task-balancing issues.

For \textsc{AsyncBezier}, we set $\vartheta = 1$, using the ``orthogonalising'' version of the \textsc{OrthoDC} update rule.

\subsection{Shakespeare}

The dialogue lines are first separated by speaker and then windowed into 80-character sequences, for a total of 4,027,181 samples drawn from 35 plays. We allocate each play wholly to a distinct client - since there are 30 clients, 5 will receive 2 plays each, simulating real-world clients which have a disproportionate share of the samples.

We use the nanoGPT framework [\url{https://github.com/karpathy/nanoGPT}] to build a GPT-2 like character-level transformer  with 6 layers, 6 heads, a 128-dimensional embedding, and dropout $p=0.1$. We train for next-character prediction given an 80-character input sequence. Most (non-LR) hyperparameters remain the same:

For \textsc{AsyncFedED} we maintain $\overline{\gamma} = 1$, which gives far superior performance when compared to $\overline{\gamma} = 3$ (and at faster wall-clock).

For \textsc{AsyncBezier}, we instead set $\vartheta = 0$, using the ``gradient surgery'' version of the \textsc{OrthoDC} update rule.

\subsection{CXR8}

This is a dataset of 112,120 $128 \times 128$ black-and-white chest X-Ray images. 8 conditions (\textit{Atelectasis}, \textit{Cardiomegaly}, \textit{Effusion}, \textit{Infiltration}, \textit{Mass}, \textit{Nodule}, \textit{Pneumonia}, \textit{Pneumothorax}) are labelled for and the model is trained to detect their presence, encoded as a multi-hot vector to allow for co-incidence. The data is drawn from scans of 30,805 patients, with each assigned wholly to one of 30 clients.

For CXR8, we use the ShuffleNetv2 architecture \citep{shufflenet}, expanded to the $\times 1.5$ version. We use the weights available from PyTorch (\url{https://docs.pytorch.org/vision/main/models/generated/torchvision.models.shufflenet_v2_x1_5}) which have been pre-trained on the general-purpose ImageNet dataset. The CXR8 images are then rescaled to $128 \times 128$ and reshaped to 3 channels in order to match ImageNet input before being used to fine-tune the model. 

$\vartheta$ remains $= 0$ for \textsc{AsyncBezier} and $\overline{\gamma} = 1$ for \textsc{AsyncFedED}.

\subsection{Fairness Calculations}
\label{sec:faircalc}

\newcommand{\abs}[1]{\left|#1\right|}
For completeness, we provide the method to compute the Gini Coefficient and Theil Index as used in Figure \ref{fig:fairnessplot}; both definitions are sourced from \cite{econineq}. The Gini Coefficient is a measure of pairwise variance in a sample $X = \{x_1,...,x_N\}$, normalised by the sample mean $\bar{X}$:
\begin{equation}
    \text{Gini}(X) := \frac{1}{2N^2\bar{X}}\sum_{i \in[N]}\sum_{j \in [N]} \abs{x_i - x_j}
\end{equation}
Intuitively, it measures the difference in area between the plot of cumulative relative ``wealth'' (here, the values of $x_i$) against cumulative proportion of the population for the observed sample and the plot that would be yielded from the uniform distribution between minimum and maximum values (a straight line).

The Theil Index is a measure derived instead from information theory, quantifying the difference between the Shannon entropy of the observed distribution of proportional ``wealth'' and the entropy of the same uniform distribution:
\begin{equation}
    \text{Theil}(X) := \frac{1}{N\bar{X}} \sum_{i \in [N]} x_i \log \left(\frac{x_i}{\bar{X}}\right)
\end{equation}
We note that these are both simply measures of concentration for $X$'s distribution, but this is a valid proxy for inequality as distance from the ``most equal'' uniform distribution.

\end{document}